\newtheorem{lemma}{Lemma}[section]
\newtheorem{proposition}{Proposition}[section]
\newtheorem{remark}{Remark}[section]
\newtheorem{example}{Example}[section]
\newtheorem{assumption}{Assumption}[section]
\begin{document}

\title{Robust Predictive Motion Planning by Learning Obstacle Uncertainty}

\author{Jian Zhou,  Yulong Gao, Ola Johansson, Bj\"orn Olofsson, and Erik Frisk 
\thanks{This research was supported by the Strategic Research Area at Link\"oping-Lund in Information Technology (ELLIIT). (\it{corresponding author: Yulong Gao}).}
\thanks{Jian Zhou, Ola Johansson, and Erik Frisk are with the Division of Vehicular Systems, Department of Electrical Engineering, Link\"oping University,  SE-581 83 Link\"oping, Sweden. Email: \texttt{\{jian.zhou, ola.johansson, erik.frisk\}@liu.se}.
}
\thanks{Yulong Gao is with the Department of Electrical and Electronic Engineering,
Imperial College London, United Kingdom. Email: \texttt{yulong.gao@imperial.ac.uk}.}
\thanks{Bj\"orn Olofsson is with the Department of Automatic Control, Lund University, SE-221 00 Lund, Sweden, and also with the Division of Vehicular Systems, Department of Electrical Engineering, Link\"oping University, SE-581 83 Link\"oping, Sweden. Email: 
\texttt{bjorn.olofsson@control.lth.se}.
}
}

\markboth{This paper has been accepted for publication by IEEE Transactions on Control Systems Technology}%
{Shell \MakeLowercase{\textit{et al.}}: A Sample Article Using IEEEtran.cls for IEEE Journals}

\maketitle

\begin{abstract}
Safe motion planning for robotic systems in dynamic environments is nontrivial in the presence of uncertain obstacles, where estimation of obstacle uncertainties is crucial in predicting future motions of dynamic obstacles. The worst-case characterization gives a conservative uncertainty prediction and may result in infeasible motion planning for the ego robotic system. In this paper, an efficient, robust, and safe motion-planning algorithm is developed by learning the obstacle uncertainties online. More specifically, the unknown yet intended control set of obstacles is efficiently computed by solving a linear programming problem. The learned control set is used to compute forward reachable sets of obstacles that are less conservative than the worst-case prediction. Based on the forward prediction, a robust model predictive controller is designed to compute a safe reference trajectory for the ego robotic system that remains outside the reachable sets of obstacles over the prediction horizon. The method is applied to a car-like mobile robot in both simulations and hardware experiments to demonstrate its effectiveness.
\end{abstract}

\begin{IEEEkeywords}
Robust motion planning, predictive control, uncertainty quantification, safe autonomy.
\end{IEEEkeywords}

\section{Introduction}\label{sec: Introduction}
\begin{figure}[t]
\centering
\subfloat[ ]{\includegraphics[width=0.7\columnwidth]{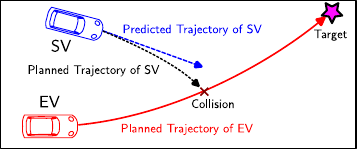}}
\vfil
\subfloat[ ]{\includegraphics[width=0.7\columnwidth]{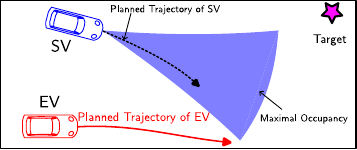}}
\vfil
\subfloat[ ]{\includegraphics[width=0.7\columnwidth]{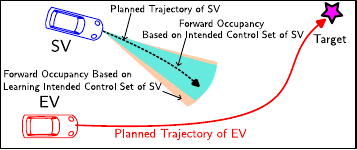}}
\centering
\caption{Reach-avoid scenario where the ego vehicle (EV) aims at planning a safe trajectory in the presence of an uncertain surrounding vehicle (SV). (a) Deterministic approach: no consideration of motion uncertainty of the obstacle, therefore the planning is unsafe. (b) Worst-case robust approach: considering the worst-case motion uncertainty of the obstacle yields an infeasible problem. (c) The proposed approach: reducing the conservatism and planning a feasible reference trajectory.}
\label{fig:Motivation_Example}
\end{figure}
\IEEEPARstart{S}{afe} motion planning is an important research topic in control and robotics \cite{zhou2024homotopic}, \cite{seo2022real}. Dynamic and uncertain environments make it challenging to perform efficient, safe and robust planning for a robotic system. One crucial source of uncertainty from the environment is obstacle uncertainty, which refers to the uncertain future motion of dynamic obstacles \cite{batkovic2023experimental}. This uncertainty can be exemplified by unknown decision-making of human-driven vehicles, which typically acts as uncertainties to autonomous vehicles \cite{gao2022risk}.

Predicting obstacle uncertainties plays a key role in predicting future motions of dynamic obstacles \cite{nair2022stochastic}. Although the worst-case realization of uncertainties in theory ensures safety of the ego robotic system, it results in overly conservative solutions or even infeasibility of the motion-planning problem \cite{gao2022risk}, \cite{fors2022resilient}, \cite{li2021prediction}. To reduce the conservativeness from the worst-case uncertainty characterization, this paper develops a safe motion-planning algorithm that learns the obstacle uncertainties using real-time observations of the obstacles. 

The motivation can be highlighted by the reach-avoid scenario in Fig.~\ref{fig:Motivation_Example}, where an autonomous ego vehicle (EV) navigates to reach the target while avoiding a surrounding vehicle (SV), e.g., a human-driven vehicle, in the driveable area. It is assumed that there is no direct communication between the two vehicles. Since the intended trajectory of the SV is unknown to the EV, the key point is how to predict the motion of the SV. A simple way for the prediction is by propagating a constant velocity model. Despite the computational advantage, the predicted trajectory of the SV may in this way have a deviation from its intended trajectory, as shown in Fig.~\ref{fig:Motivation_Example}(a). Thus, such predictions could be insufficient for the EV to perform safe motion planning. Alternatively, assuming that the EV knows the SV's dynamics and (possibly overestimated) admissible control set,  it is reasonable to predict the maximal forward occupancy that contains all trajectory realizations of the SV\footnote{Forward occupancy represents the set of positions a system can reach over a specific horizon, defining a region that could be occupied based on the system's control capabilities.}. Although uncertainties of the SV are incorporated in the prediction, the overly conservative forward occupancy may make the motion-planning problem infeasible, as illustrated in Fig.~\ref{fig:Motivation_Example}(b). Therefore, the primary objective of this paper is to refine the robust prediction method to reduce its conservativeness while maintaining safety. This involves understanding the intended control set of the SV, which, though typically unknown, can be learned from data sampled by observations of SV. By computing the forward occupancy of the SV based on learning its intended control set, the EV can plan a trajectory that is both safe and feasible, as shown in Fig.~\ref{fig:Motivation_Example}(c). 

To realize the objective, this paper considers the motion planning of a general ego robotic system in the presence of dynamic surrounding obstacles with control strategies unknown to the ego system. A new efficient way is proposed to learn the decision-making support, i.e., the set of intended control actions of each obstacle, using observations of obstacles. Thus, the use of this set enables a better motion prediction, in the sense of less conservative than the worst-case prediction. The main contributions of this paper are:
\begin{itemize}
\item [(a)] A novel approach is proposed to learn the unknown yet intended control set of obstacles without making any assumptions about the distribution of control actions of the obstacles and without the need for training with prior data. The set is efficiently computed by solving a linear programming (LP) problem.
\item [(b)] Using the online learned set, a robust predictive motion planner is developed for motion planning of the ego system subject to collision avoidance with uncertain surrounding obstacles in dynamic environments. The method can perform resilient motion planning in the absence of prior knowledge regarding obstacle uncertainties.
\item [(c)]  The performance of the proposed motion-planning method is validated through both simulations and hardware experiments in several benchmark traffic scenarios. 
\end{itemize}
\paragraph*{Outline} This paper is organized as follows: Section~\ref{sec: related work} places this research in the context of related work in the literature. Section~\ref{Problem Statement} formalizes the general research problem. Section~\ref{Quantification} specifies the method for learning the control set of obstacles. Section~\ref{General Robust MPC for Motion Planning} designs the robust MPC for safe motion planning based on uncertainty prediction. Sections~\ref{sec: simulations} and \ref{sec: hardware exp} show the performance of the method in both simulations and hardware experiments, and Section~\ref{Conclusion} concludes the paper.

\section{Related Work}\label{sec: related work}
This section reviews the most related research from two perspectives: (1) Uncertainty prediction of dynamic systems, and (2) uncertainty-aware motion planning in time-varying environments. 

\subsection{Uncertainty Prediction}\label{sec: uncertainty prediction}
The robust approach for predicting uncertainties of a dynamic system is by formulating a sequence of sets over the horizon to cover the worst-case uncertainty realizations. This is typically achieved by forward reachability analysis \cite{chen2018hamilton}, \cite{carrizosa2024safe}. For example, \cite{seo2022real} characterized the forward reachable sets (FRSs) of the ego robotic system considering the worst-case disturbances for safety guarantees in robust motion planning. In \cite{althoff2014online}, the FRSs were predicted for both an autonomous ego vehicle and a surrounding vehicle based on the maximal disturbance set for safety verification. Focusing on uncertainty predictions of other traffic participants, the later work in \cite{althoff2016set} and \cite{wang2024safe} combined the worst-case FRSs with the road networks and the interactions with the autonomous ego vehicle, respectively, in practical driving scenarios. An alternative yet more conservative approach to the forward reachability analysis is computing the minimal robust positive invariant (MRPI) set for the ego system under the maximal disturbances, and relevant works can be found in \cite{danielson2020robust}, \cite{dixit2020trajectory}, and \cite{nezami2022safe}. 

A main problem of robust methods is that the worst-case assumptions can lead to unnecessary conservatism in the prediction. To mitigate this problem, the non-conservative methods, which formulate the set of uncertainties as a subset of the worst-case set, have been developed. For instance, \cite{althoff2009model} and \cite{wang2023reachability} formulated the probabilistic reachable set of other vehicles in traffic by assuming the decision-making process of surrounding vehicles, which is influenced by the road geometry, as a Markov chain. If sufficient prior knowledge of obstacle uncertainties is available, e.g., a large amount of training data, the probabilistic prediction can be performed by machine learning models like neural ordinary differential equations \cite{westny2023mtp}.  The training data can also be utilized to formulate the empirical reachable set that contains the $\alpha$-likely observed trajectories of a system \cite{driggs2018robust}. In addition, with particular assumptions on the distribution of obstacles, a confidence-aware prediction that formulated an obstacle occupancy with a degree of prediction confidence/risk was designed in \cite{fridovich2020confidence} and \cite{gao2022risk}, where the former assumed that the control action of an obstacle satisfies a Boltzmann distribution, and the later assumed that the leading vehicle in the automated overtaking scenario respects a supermartingale. Apart from the probabilistic prediction, learning the true disturbance set of a dynamic system was also investigated to reduce the conservativeness, as presented in \cite{gillula2013reducing} that learned the disturbance set of an ego helicopter using residuals, and \cite{gao2023robust} that learned the disturbance set of an ego vehicle using a rigid tube. 

\subsection{Uncertain-Aware Motion Planning}
Based on the prediction results, the motion-planning problems in uncertain and dynamic environments can be solved using various well-established methods \cite{paden2016survey}. Among them, model predictive control (MPC) stands out because of the straightforward inclusion of differential constraints of the system and safety constraints with uncertain obstacles and the environment \cite{benciolini2023non}. Combined with different uncertainty-prediction methods, the MPC-based planners include robust MPC, stochastic MPC, and scenario MPC \cite{zhou2022interactionnew}. For example, in \cite{batkovic2023experimental} and \cite{pek2021fail}, the robust MPC was used to generate the safe reference trajectory for an autonomous vehicle subject to the worst-case obstacle occupancy. In contrast, the stochastic MPC, as presented in \cite{nair2022collision} and \cite{brudigam2023stochastic}, considers collision avoidance with obstacle occupancy with a probability based on the assumptions on the uncertainty distribution of the obstacle. The scenario MPC can also achieve similar probabilistic constraints, where the constraints are transformed into deterministic forms using finite samples of uncertainties \cite{de2021scenario}. In addition to these methods, there have been combinations of them, like the robust scenario MPC that optimizes policies concerning multi-mode obstacle-uncertainty predictions \cite{batkovic2021robust}, and the scenario stochastic MPC to also handle multi-modal obstacle uncertainties with known distributions \cite{brudigam2021collision}.

\section{Problem Statement}\label{Problem Statement}
This section first defines the general notations used in the paper, then introduces models of the ego system and dynamic surrounding obstacles and finally formulates the robust motion-planning problem to be solved. 

\subsection{Notations}\label{sec: Notation}
$\mathbb{R}^n$ is the $n$-dimensional real number vector space, $\mathbb{R}_{+}^n$ is the $n$-dimensional non-negative vector space, $\mathbb{N}^n$ is the $n$-dimensional natural number vector space, and $\mathbb{N}_{+}^n$ is the $n$-dimensional positive integer vector space. $[0, 1]^n$ means an $n$-dimensional vector with each element between $[0, 1]$. $I^n$ indicates an $n \times n$ identity matrix. Matrices of appropriate dimension with all elements equal to 1 and 0 are denoted by $\bm{1}$ and $\bm{0}$, respectively. An interval of integers is denoted by $\mathbb{I}_a^b = \{a, a+1, \cdots, b\}$. For two sets $\mathcal{A}$ and $\mathcal{B}$, $\mathcal{A} \oplus \mathcal{B} = \left\{ x + y \ | \ x \in \mathcal{A}, y \in \mathcal{B} \right\}$. For a set $\Gamma \subset \mathbb{R}^{n_a+n_b}$, the set projection is defined as $\mathcal{A} = {\rm Proj}_a(\Gamma)=\left\{ a\in \mathbb{R}^{n_a}: \exists b\in \mathbb{R}^{n_b}, (a, b)\in \Gamma \right\}$. For the MPC iteration, the current time step is indicated by $t$. A prediction of a variable $x$ at time step $t+i$ over the prediction horizon is represented as $x_{i|t}$, where $i = 1, \ldots, N$, and $N$ is the prediction horizon. The sampling interval is represented as $T$. The cardinality of a set $\mathcal{A}$ is represented by $|\mathcal{A}|$. The weighted inner product of a vector $x$ is denoted by $||x||_A^2 = x^{\top}Ax$.

\subsection{Modeling of Ego System and Surrounding Obstacles}\label{Sec:Modeling}
The ego system's dynamics are described by a discrete-time control system
\begin{equation}
x^e_{t+1} = f(x^e_{t}, \ u^e_{t}), \label{eq_ego_model}
\end{equation}
\noindent where $x_{t}^e \in \mathbb{X}^e\subseteq \mathbb{R}^{n^e_x}$ is the state and $u_{t}^e \in \mathbb{U}^e\subseteq \mathbb{R}^{n^e_u}$ is the control input at time step $t$.  Here, the superscript $e$ indicates the ego system. The sets $\mathbb{X}^e$ and $\mathbb{U}^e$ are the state set and admissible control set, respectively. 

Consider an environment with multiple surrounding obstacles, where the motion of the center of geometry of each obstacle is modeled by a linear time-varying (LTV) system
\begin{equation}\label{eq_obs_LTV_model}
x^s_{t+1} = A^s_tx^s_{t} + B^s_{t}u^s_{t},
\end{equation}
\noindent where $A^s_{t}$ and $B^s_{t}$ are the system matrices at time step $t$, which are determined according to specific kinematic or dynamic equations of the obstacle. The vector $x^s_{t}\in  \mathbb{R}^{n^s_x}$ is the state and $u^s_{t}\in  \mathbb{R}^{n^s_u}$ is the input at time step $t$. The superscript $s$ indicates the $s$-th obstacle and the index set of all involved obstacles is denoted by a finite set $\mathcal{S}$.  

As illustrated in the motivating example in Section~\ref{sec: Introduction}, the control capability of the obstacles is distinguished by two different control sets. For the $s$-th obstacle, {\it the admissible control set} $\mathbb{U}^s$ refers to the set of control inputs determined by the physical limitations in the worst case, while {\it the intended control set} $\tilde{\mathbb{U}}_t^s\subseteq \mathbb{U}^s$ refers to the set that covers the control inputs that would be used by the obstacle from step $t$ over a planning horizon.  It is assumed that  $\mathbb{U}^s$  is known, or can be overly estimated, by the ego system, while $\tilde{\mathbb{U}}_t^s$ is unknown.

The sets $\tilde{\mathbb{U}}_t^s$ and $\mathbb{U}^s$ can be illustrated by the following example. The maximal acceleration of a vehicle at the limit of friction could be $\pm 8 \ {\rm m/s^2}$. In practice, because of the capacity of the power system or caution of the driver, the vehicle may only generate accelerations within $\pm 2 \ {\rm m/s^2}$. In this case, the admissible control set is $\mathbb{U}^s=[-8, 8] \ {\rm m/s^2}$, while the intended control set is $\tilde{\mathbb{U}}_t^s=[-2, 2] \ {\rm m/s^2}$, $\forall t\in \mathbb{N}$.

Example~\ref{Ex:examplemodel} will present the models of the ego system and surrounding obstacles in a specific scenario to illustrate the general forms in \eqref{eq_ego_model} and \eqref{eq_obs_LTV_model}.

\begin{remark}\label{remark:linear model}
The LTV model \eqref{eq_obs_LTV_model} is used for two main reasons. (1) Since the true dynamics of the obstacle are unknown, a linear model can easily be derived to approximate the motion characteristics of the obstacle. It benefits the computation of forward reachable sets when predicting the motion of obstacles. (2)  The linear model also facilitates the estimation of the control inputs of obstacles for the proposed method, while this can be challenging or computationally expensive for nonlinear models.
\end{remark}

\begin{example}\label{Ex:examplemodel}
{\textbf{Vehicle models in the reach-avoid problem:}} Consider the reach-avoid example in Fig.~\ref{fig:Motivation_Example} throughout the paper, the EV in Fig.~\ref{fig:Motivation_Example} is modeled by a nonlinear single-track kinematic model \cite{brudigam2023stochastic},
\begin{subequations}\label{eq: EV model}
\begin{align}
\dot{p}_x^e & = v^e{\rm cos}\left(\varphi^e + \beta^e \right), \label{eq_EV_model_a} \\
\dot{p}_y^e & = v^e{\rm sin}\left(\varphi^e + \beta^e\right), \label{eq_EV_model_b} \\
\dot{\varphi}^e & = \frac{v^e}{l_r}{\rm sin}\left(\beta^e\right), \label{eq_EV_model_c} \\
\dot{v}^e & = a^e, \label{eq_EV_model_d} \\
\dot{a}^e & = \eta^e, \label{eq_EV_model_e} \\
\beta^e & = {\rm arctan}\left(\frac{l_r}{l_f + l_r}{\rm tan}(\delta^e)\right), \label{eq_EV_model_f}
\end{align}
\end{subequations}
\noindent where $(p^e_x, p^e_y)$ is the coordinate of the center of geometry of the vehicle in the ground coordinate system. The variable $\varphi^e$ is the yaw angle, $\beta^e$ is the angle of the velocity of the center of geometry with the longitudinal axis of the vehicle, $v^e$, $a^e$, and $\eta^e$ mean the longitudinal speed, acceleration, and jerk, in the vehicle frame, respectively, $\delta^e$ is the front tire angle, and parameters $l_f$ and $l_r$ are the distances from the center of geometry to the front axle and rear axle, respectively. The model takes $a^e$ and $\eta^e$ as inputs and the other variables as states. This continuous-time model can be approximated to a discrete-time model in \eqref{eq_ego_model} using a sampling interval $T$ with different discretization methods \cite{ascher1998computer}.

The actual model of the SV in Fig.~\ref{fig:Motivation_Example} is unknown. From the perspective of the EV, the motion of the center of geometry of the SV is modeled by a linear time-invariant model 
\begin{equation}
x_{t+1}^s = A^sx_t^s + B^su_t^s, \label{eq_SV_model}
\end{equation}
where
$$x_t^s = [p_{x, t}^s \ v_{x, t}^s \ p_{y, t}^s \ v_{y, t}^s]^{\top}, \ u_t^s = [a_{x,t}^s \ a_{y,t}^s]^{\top},$$
$$A^s = \begin{bmatrix} 1 & T & 0 & 0\\ 0 & 1 & 0 & 0 \\0 & 0 & 1 & T \\ 0 & 0 & 0 & 1 \end{bmatrix}, \ B^s = \begin{bmatrix} T^2/2 & 0 \\ T & 0 \\ 0 & T^2/2 \\ 0 & T \end{bmatrix},$$
and the variables $p_{x, t}^s$, $v_{x, t}^s$, $a_{x, t}^s$ mean the longitudinal position, velocity, and acceleration at time step $t$, respectively, in the ground coordinate system. Variables $p_{y, t}^s$, $v_{y, t}^s$, and $a_{y, t}^s$ mean the lateral position, velocity, and acceleration at time step $t$, respectively, in the ground coordinate system. The control input satisfies $u_t^s \in \tilde{\mathbb{U}}_t^s \subseteq \mathbb{U}^s$, where the details will be given in case studies in Section~\ref{sec: simulations}. 

\end{example}

\subsection{Research Problem}\label{Problem}
The goal of the ego system is to plan an optimal and safe reference trajectory to fulfill desired specifications in the presence of uncertain surrounding obstacles.  The planned reference trajectory is anticipated to be less conservative than that of the worst-case realization, yet robust against the obstacles' uncertain motions. Toward this goal, this paper has to solve three sub-problems: (i) learning the unknown intended control set $\tilde{\mathbb{U}}_t^s$ of each obstacle, i.e., computing a set to approximate $\tilde{\mathbb{U}}_t^s$; (ii) predicting the forward occupancy of each obstacle based on the obstacle model \eqref{eq_obs_LTV_model} and the learned intended control set of the obstacle; (iii) planning an optimal reference trajectory that enables the ego system to avoid collision with the occupancy of obstacles and satisfy differential and admissible constraints on the ego system model \eqref{eq_ego_model}. The first two sub-problems are addressed in Section~\ref{Quantification}, and the motion-planning method is presented in Section~\ref{General Robust MPC for Motion Planning}.

\section{Learning Obstacle Uncertainty} \label{Quantification}
This section describes how to learn the intended control set $\tilde{\mathbb{U}}_t^s$ of a general surrounding obstacle by a sampling-based approach, and how to predict the forward occupancy for the obstacle based on learning the set $\tilde{\mathbb{U}}_t^s$. 

\subsection{Learning Intended Control Set $\tilde{\mathbb{U}}_t^s$} \label{Quantification of the Precise Input Set of the Obstacle}
\begin{assumption}\label{assum:measure control}
The system~\eqref{eq_obs_LTV_model} does not have redundant actuators, and at the current time step $t$, the state $x_t^s$ can be measured or estimated by the ego system.
\end{assumption}

This implies that the matrix $B_{t-1}^s$ of model~\eqref{eq_obs_LTV_model} has full column rank, such that at the current time step $t$, the ego system can find unique solution when solving for $u_{t-1}^s$, i.e., the control input of \eqref{eq_obs_LTV_model} at the previous time step $t-1$, from the expression $B_{t-1}^su_{t-1}^s = x_{t}^s - A_{t-1}^sx_{t-1}^s$. For example, the solution $u_{t-1}^s$ in~\eqref{eq_SV_model} is $u_{t-1}^s = [(v_{x,t}^s - v_{x,t-1}^s)/T \ (v_{y,t}^s - v_{y,t-1}^s)/T]^{\top}$. Since the intended control set is a hidden variable, an approach is to infer the hidden information using past observations~\cite{hu2023active}. To this end, based on Assumption~\ref{assum:measure control}, we define the information set $\mathcal{I}^s_t$ available to the ego system at time step $t$, which consists of the collected inputs of the $s$-th obstacle until time step $t$. The information set is updated as
\begin{equation}
\mathcal{I}^s_{t}=\{u^s_k: k \in \mathbb{I}_0^{t-1}\}. \label{eq_update_information}
\end{equation}

The set $\mathcal{I}_t^s \subseteq \tilde{\mathbb{U}}_t^s$ contains observed control actions of obstacle $s$, and the process of generating the control actions by the obstacle is unknown to the ego system. Here, it is modeled as a random process such that the elements of the set $\mathcal{I}_t^s$ can be regarded as samples generated with unknown distributions from the intended control set of the obstacle. Next, the process of learning the unknown set $\tilde{\mathbb{U}}_t^s$ using the information set $\mathcal{I}^s_t$ and the known admissible control set $\mathbb{U}^s$ will be demonstrated. The following assumption is made before proceeding. 
\begin{assumption}\label{assum:poly_U}
For any obstacle $s$, the admissible control set $\mathbb{U}^s$ is a convex and compact polytope that contains the origin in its interior, denoted by 
\begin{equation*}
\mathbb{U}^s = \left\{u \in \mathbb{R}^{n^s_u}: H^su \leq \bm{1}\right\}, \ H^s \in \mathbb{R}^{n_v^s \times n_u^s}. \label{eq_poly_U}
\end{equation*}
\end{assumption}

In order to learn the unknown set $\tilde{\mathbb{U}}_t^s\subseteq \mathbb{U}^s$, a subset of $\mathbb{U}^s$ is parameterized in the form of
\begin{subequations}
\begin{align}
\mathbb{U}^s(v, \rho, \bm{\theta}) &= f(H^s, \bm{\theta}) \oplus \{(1-\rho)v\}, \label{Eq:quanset_a} \\
f(H^s, \bm{\theta}) &= \left\{u \in \mathbb{R}^{n_u^s}: H^su \leq \bm{\theta}\right\}, \label{Eq:quanset_b}
\end{align}
\end{subequations}
\noindent where $v \in \mathbb{U}^s$, $\rho \in [0, 1]$, and $\bm{\theta} \in [0, 1]^{n_v^s}$ are design parameters. It is shown in Lemma~\ref{lemma: subset} that such parameterization under Assumption~\ref{assum:poly_U} constructs a subset of $\mathbb{U}^s$.

\begin{lemma} \label{lemma: subset}
For any $v \in \mathbb{U}^s$, $\bm{\theta} \in [0, 1]^{n_v^s}$, and $\rho \in [0, 1]$, if $\bm{\theta} \leq \rho \bm{1}$, then
$\mathbb{U}^s(v, \rho, \bm{\theta}) \subseteq \mathbb{U}^s$.
\end{lemma}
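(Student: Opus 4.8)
The plan is to prove the set inclusion directly by a pointwise argument, exploiting the fact that both $\mathbb{U}^s$ and the building block $f(H^s,\bm{\theta})$ are described by the \emph{same} constraint matrix $H^s$, so that membership reduces to comparing the right-hand sides of the half-space inequalities. First I would take an arbitrary point $w \in \mathbb{U}^s(v,\rho,\bm{\theta})$ and use the definition of the Minkowski sum in \eqref{Eq:quanset_a} to write $w = u + (1-\rho)v$ for some $u \in f(H^s,\bm{\theta})$, i.e.\ $H^s u \leq \bm{\theta}$, and with $v \in \mathbb{U}^s$, i.e.\ $H^s v \leq \bm{1}$.

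Next I would estimate $H^s w = H^s u + (1-\rho)\,H^s v$. Here I use $\rho \in [0,1]$ so that $1-\rho \geq 0$, which lets me multiply the inequality $H^s v \leq \bm{1}$ by the scalar $1-\rho$ without reversing it, giving $(1-\rho) H^s v \leq (1-\rho)\bm{1}$. Combining with $H^s u \leq \bm{\theta}$ yields $H^s w \leq \bm{\theta} + (1-\rho)\bm{1}$ componentwise.

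Finally I would invoke the hypothesis $\bm{\theta} \leq \rho\bm{1}$ to conclude $\bm{\theta} + (1-\rho)\bm{1} \leq \rho\bm{1} + (1-\rho)\bm{1} = \bm{1}$, hence $H^s w \leq \bm{1}$, i.e.\ $w \in \mathbb{U}^s$. Since $w$ was arbitrary, $\mathbb{U}^s(v,\rho,\bm{\theta}) \subseteq \mathbb{U}^s$. There is no real ``hard part'' here: the argument is a one-line half-space comparison. The only points worth stating carefully are that $\rho \le 1$ is what makes the scaling step valid, and that $\bm{\theta} \in [0,1]^{n_v^s}$ (together with $H^s$ having the origin in the interior of $\mathbb{U}^s$, from Assumption~\ref{assum:poly_U}) guarantees $f(H^s,\bm{\theta})$ is nonempty so the statement is not vacuous; I would remark on this but it is not needed for the inclusion itself.
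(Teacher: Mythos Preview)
Your argument is correct and is essentially identical to the paper's own proof: both reduce the inclusion to the componentwise chain $\bm{\theta} + (1-\rho)H^s v \leq \bm{\theta} + (1-\rho)\bm{1} \leq \rho\bm{1} + (1-\rho)\bm{1} = \bm{1}$, using $v \in \mathbb{U}^s$, $\rho \in [0,1]$, and $\bm{\theta} \leq \rho\bm{1}$. The only cosmetic difference is that the paper first writes the explicit half-space form of $\mathbb{U}^s(v,\rho,\bm{\theta})$ and then compares right-hand sides, whereas you carry a generic point $w$ through the same inequalities.
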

\begin{proof}
The explicit form of \eqref{Eq:quanset_a}--\eqref{Eq:quanset_b} can be written as 
\begin{equation*} \label{Eq: explicit_uncertainty_quan}
\mathbb{U}^s(v, \rho, \bm{\theta}) = \left\{u \in \mathbb{R}^{n^s_u}: H^su \leq \bm{\theta} + (1-\rho)H^sv\right\}.
\end{equation*}

It follows that $\bm{\theta} + (1-\rho)H^sv \leq \bm{\theta} + (1-\rho)\bm{1} \leq \bm{1}$. Therefore, $\mathbb{U}^s(v, \rho, \bm{\theta}) \subseteq \mathbb{U}^s$.
\end{proof}

By parameterizing the set $\mathbb{U}^s(v, \rho, \bm{\theta})$ using \eqref{Eq:quanset_a}--\eqref{Eq:quanset_b} and leveraging its properties shown in Lemma~\ref{lemma: subset}, it is possible to find the optimal parameters $\rho^{\star}$, $v^{\star}$, and ${\bm \theta^{\star}}$ that minimize the volume of set $\mathbb{U}^s(v, \rho, \bm{\theta})$, while containing the historical information set $\mathcal{I}_t^s$. This optimal set, which is denoted by $\hat{\mathbb{U}}_t^s$, serves as the optimal approximation of the intended control set at time step $t$. The optimal parameters $v^{\star}$, $\rho^{\star}$, and ${\bm \theta^{\star}}$ can be obtained by 
solving the following problem:
\begin{equation} \label{Opt2:quanset}
\begin{aligned}
\mathop{\rm minimize}\limits_{v, \rho, \bm{\theta}} \quad & \bm{1}^{\top}\bm{\theta} + \rho & \\
{\rm subject \ to}\quad
&H^s u^s-(1-\rho)H^sv \leq \bm{\theta}, \ \forall  u^s\in \mathcal{I}_t^s, & \\
&H^sv \leq \bm{1}, & \\
&\rho \in [0, 1],  \ \bm{\theta} \in [0, 1]^{n_v^s}, \ \bm{\theta} \leq \rho\bm{1}, &
\end{aligned}
\end{equation}
\noindent where the cost function minimizes the volume of set $\mathbb{U}^s(v, \rho, \bm{\theta})$ by minimizing summation of the vector ${\bm \theta}$ and its upper bound $\rho$. The first constraint of \eqref{Opt2:quanset} implies that $\mathcal{I}_t^s \subseteq \mathbb{U}^s(v, \rho, \bm{\theta})$, i.e., the information set $\mathcal{I}_t^s$ is fully contained within the parameterized set $\mathbb{U}^s(v, \rho, \bm{\theta})$. The second and third constraints are required by Lemma~\ref{lemma: subset}. The problem \eqref{Opt2:quanset} is non-convex as a result of the product of decision variables $v$ and $\rho$. The following proposition shows that \eqref{Opt2:quanset} is equivalent to a linear programming (LP) problem.

\begin{proposition} \label{LP for quanset}
Define an LP problem as
\begin{equation} \label{LP:quanset}
\begin{aligned}
\mathop{\rm minimize}\limits_{y, \rho, \bm{\theta}} \quad & \bm{1}^{\top}\bm{\theta} + \rho & \\
{\rm subject \ to}\quad
&H^s u^s-H^sy \leq \bm{\theta}, \ \forall  u^s\in \mathcal{I}_t^s, & \\
&H^sy \leq (1-\rho)\bm{1}, & \\
&\rho \in [0, 1],  \ \bm{\theta} \in [0, 1]^{n_v^s}, \ \bm{\theta} \leq \rho\bm{1}. &
\end{aligned}
\end{equation}

Denote the optimal solution of \eqref{LP:quanset} by $(y^{\star}, \rho^{\star}, \bm{\theta}^{\star})$. Then, the optimal solution of \eqref{LP:quanset} gives an optimal estimation of the unknown intended control set as 
\begin{equation}\label{Eq:optquanset}
\hat{\mathbb{U}}_t^s = f(H^s, \bm{\theta}^{\star}) \oplus y^{\star},
\end{equation}
where $f(H^s, \bm{\theta}^{\star})$ is defined as in \eqref{Eq:quanset_b}. The set $\hat{\mathbb{U}}_t^s$ obtained in \eqref{Eq:optquanset} is equivalent to the set obtained by solving \eqref{Opt2:quanset}.
\end{proposition}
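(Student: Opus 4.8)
The claim is that the non-convex problem \eqref{Opt2:quanset} and the LP \eqref{LP:quanset} produce the same estimated set $\hat{\mathbb{U}}_t^s$. The natural strategy is a change of variables: introduce $y := (1-\rho)v$. The plan is to show that this substitution is a bijection between the feasible sets of the two problems (at fixed $\rho$ and $\bm\theta$, or jointly), that it leaves the objective $\bm 1^{\rm T}\bm\theta + \rho$ untouched, and that the resulting estimated set is the same under either description. Concretely, I would proceed in three steps.

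\emph{Step 1 (feasibility correspondence).} Take a feasible point $(v,\rho,\bm\theta)$ of \eqref{Opt2:quanset} and set $y=(1-\rho)v$. Then $H^s u^s - H^s y = H^s u^s - (1-\rho)H^s v \le \bm\theta$ reproduces the first constraint of \eqref{LP:quanset}, and $H^s y = (1-\rho) H^s v \le (1-\rho)\bm 1$ follows from $H^s v \le \bm 1$ and $1-\rho \ge 0$; the remaining constraints on $\rho,\bm\theta$ are identical. Conversely, given a feasible $(y,\rho,\bm\theta)$ of \eqref{LP:quanset}, I would recover $v$: if $\rho<1$, set $v = y/(1-\rho)$, and check $H^s v = H^s y/(1-\rho) \le \bm 1$ and that the first constraint of \eqref{Opt2:quanset} holds. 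The boundary case $\rho = 1$ needs a separate remark: then the constraint $\bm\theta \le \rho\bm1$ is vacuous on the upper side, $H^s y \le \bm 0$, and since $\mathbb{U}^s$ contains the origin in its interior (Assumption~\ref{assum:poly_U}) the only $y$ with $H^s y \le \bm 0$ is $y=0$ (otherwise $H^s$ would have a nontrivial kernel direction staying in $\mathbb{U}^s$ unboundedly, contradicting compactness); so $y = 0 = (1-\rho)v$ for any $v\in\mathbb{U}^s$, and feasibility transfers. I expect this $\rho=1$ corner to be the main obstacle — everything else is mechanical.

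\emph{Step 2 (objective invariance and optimal-value equality).} The objective $\bm 1^{\rm T}\bm\theta + \rho$ depends only on $(\rho,\bm\theta)$, which are shared coordinates under the substitution, so the bijection of Step 1 is objective-preserving. Hence the two problems have the same optimal value, and from any optimizer $(y^\star,\rho^\star,\bm\theta^\star)$ of \eqref{LP:quanset} one obtains an optimizer $(v^\star,\rho^\star,\bm\theta^\star)$ of \eqref{Opt2:quanset} with $y^\star=(1-\rho^\star)v^\star$, and vice versa.

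\emph{Step 3 (identifying the sets).} Finally I would match the set descriptions. By definition $\mathbb{U}^s(v^\star,\rho^\star,\bm\theta^\star) = f(H^s,\bm\theta^\star)\oplus(1-\rho^\star)v^\star = f(H^s,\bm\theta^\star)\oplus y^\star = \hat{\mathbb{U}}_t^s$ from \eqref{Eq:optquanset}, using \eqref{Eq:quanset_a}. Since Lemma~\ref{lemma: subset} (whose hypotheses $\bm\theta^\star\le\rho^\star\bm 1$, $v^\star\in\mathbb{U}^s$ hold by feasibility) guarantees $\hat{\mathbb{U}}_t^s\subseteq\mathbb{U}^s$, and the first constraint guarantees $\mathcal{I}_t^s\subseteq\hat{\mathbb{U}}_t^s$, the LP solution yields a valid estimate, and it coincides with the set obtained from \eqref{Opt2:quanset}. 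This completes the argument.
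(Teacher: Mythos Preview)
Your proposal is correct and follows essentially the same route as the paper: the change of variables $y=(1-\rho)v$, the separate treatment of the boundary case $\rho=1$ (where $H^s y\le\bm 0$ forces $y=0$ by boundedness of $\mathbb U^s$), and the identification of the resulting set via \eqref{Eq:quanset_a}. Your Step~3 and the recession-cone justification for $y=0$ are in fact more explicit than the paper's argument, which simply asserts that $H^s y\le\bm 0$ implies $y=\bm 0$.
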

\begin{proof}
For \eqref{Opt2:quanset} with $0 \leq \rho < 1$, it is equivalent to \eqref{LP:quanset} with $0 \leq \rho < 1$. This equivalence arises from substituting the variable $y$ in \eqref{LP:quanset} with $(1-\rho)v$. When $\rho = 1$, \eqref{Opt2:quanset} transforms into:
\begin{equation} \label{Opt2:quanset_rho_1}
\begin{aligned}
\mathop{\rm minimize}\limits_{\rho, \bm{\theta}} \quad & \bm{1}^{\top}\bm{\theta} + \rho & \\
{\rm subject \ to}\quad
&H^s u^s\leq \bm{\theta}, \ \forall  u^s\in \mathcal{I}_t^s, & \\
&\rho = 1, \ \bm{\theta} \in [0, 1]^{n_v^s}, &
\end{aligned}
\end{equation}
\noindent where in \eqref{Opt2:quanset_rho_1}, the constraint $H^sv \leq \bm{1}$ from \eqref{Opt2:quanset} is removed as it is redundant. On the other hand, the equivalent form of \eqref{LP:quanset} with $\rho = 1$ is given by:
\begin{equation} \label{LP:quanset_rho_1}
\begin{aligned}
\mathop{\rm minimize}\limits_{y, \rho, \bm{\theta}} \quad & \bm{1}^{\top}\bm{\theta} + \rho & \\
{\rm subject \ to}\quad
&H^s u^s-H^sy \leq \bm{\theta}, \ \forall  u^s\in \mathcal{I}_t^s, & \\
&H^sy \leq \bm{0}, & \\
&\rho = 1,  \ \bm{\theta} \in [0, 1]^{n_v^s}. &
\end{aligned}
\end{equation}

The following shows that the constraint $H^sy \leq \bm{0}$ in \eqref{LP:quanset_rho_1} is satisfied if and only if $y = \bm{0}$.

Since the origin is an interior of the set $\mathbb{U}^s$, suppose $H^sy \leq \bm{0}$ and $y \neq \bm{0}$ hold. This means $\forall \varepsilon \in \mathbb{R}_+$ such that $H^s(\varepsilon y) \leq \bm{0} \leq \bm{1}$, this further implies that $\varepsilon y \in \mathbb{U}^s$, where $\varepsilon y$ is the terminal of a line segment starting from the origin. However, since the set $\mathbb{U}^s$ is bounded, increasing the value of $\varepsilon$ will eventually cause $\varepsilon y$ to exceed the limit of $\mathbb{U}^s$. Therefore, there does not exist any $y \neq {\bm 0}$ satisfying $H^sy \leq {\bm 0}$, and the only feasible solution of the variable $y$ in \eqref{LP:quanset_rho_1} is $y=\bm{0}$.

This confirms that \eqref{Opt2:quanset_rho_1} and \eqref{LP:quanset_rho_1} are equivalent with $\rho = 1$. Hence, we conclude that \eqref{Opt2:quanset} and \eqref{LP:quanset} are equivalent, and the optimal quantified set $\hat{\mathbb{U}}_t^s$ is given in the form of \eqref{Eq:optquanset}.
\end{proof}

The sets $\mathbb{U}^s$, $\tilde{\mathbb{U}}_t^s$, and $\hat{\mathbb{U}}_t^{s}$ are illustrated in Fig.~\ref{fig:set_illustration}, which reflects that the set $\hat{\mathbb{U}}_t^s$ depends on the shape of $\mathbb{U}^s$ and the information set $\mathcal{I}_t^s$, and does not relate to the volume of $\mathbb{U}^s$. 
\begin{remark}\label{remark: different quantification}
Formulation \eqref{LP:quanset} is a standard randomized program \cite{campi2008exact}. It is shown in \cite[Theorem 3]{gao2023robust} that if $\tilde{\mathbb{U}}_t^s$ is time-invariant, the sequence of control actions are independent and identically distributed, and $|\mathcal{I}_t^s|$ is large enough, then with high confidence, the set $\hat{\mathbb{U}}_t^s$ is a good approximation of the true intended set $ \tilde{{\mathbb{U}}}_t^s$. While these assumptions are stringent in practice, the results highlight that the robustness of $\hat{\mathbb{U}}_t^s$ is enhanced with a larger information set $\mathcal{I}_t^s$.
\end{remark}
\begin{figure}[!t]
\centering
\includegraphics[width=0.5\columnwidth]{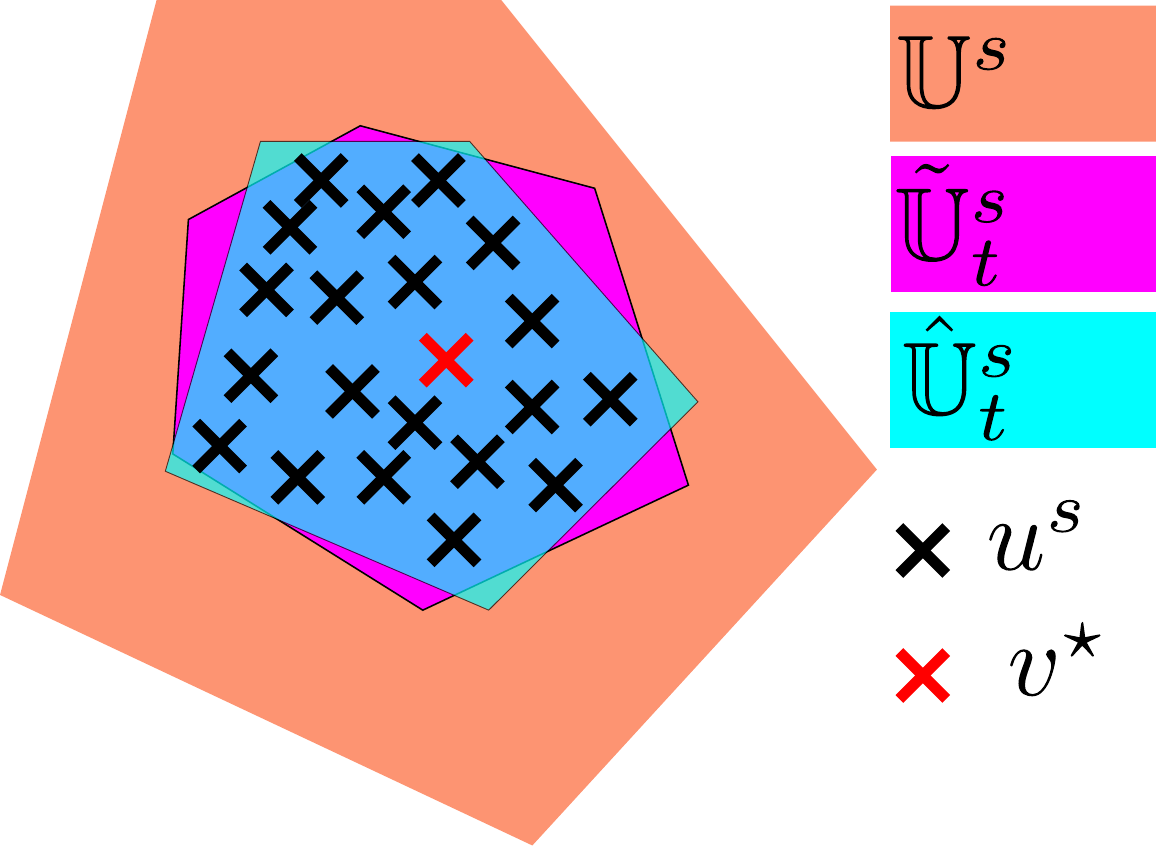}
\caption{Learning the set $\tilde{\mathbb{U}}_t^s$ by solving LP \eqref{LP:quanset}.}
\label{fig:set_illustration}
\end{figure}

\subsection{Computing $\hat{\mathbb{U}}_t^s$ with Low Complexity} \label{sec:Computing set with low complexity}
Note that the computational complexity of the LP \eqref{LP:quanset} for computing $\hat{\mathbb{U}}_t^s$ is polynomial in $|\mathcal{I}_t^s|$ if $\mathcal{I}_t^s$ is increasing with time. This means the complexity of \eqref{LP:quanset} increases with each time step. To address this problem, two ways are proposed to manage the online computational complexity. 

\subsubsection{Moving-Horizon Approach}\label{sec: moving horizon approach}
The first way is to control the size of $\mathcal{I}_t^s$ in a moving-horizon manner. That is, given a horizon $L \in \mathbb{N}_{+}$, the set $\mathcal{I}_t^s$ is updated as 
$$\mathcal{I}^s_{t}=\{u^s_k: k \in \mathbb{I}_{t-L-1}^{t-1}\}$$ 
with an initialization $|\mathcal{I}^s_{0}|=L$. In this way, \eqref{LP:quanset} is accompanied by $L$ scenario constraints.  

\subsubsection{Online Recursion}\label{sec: online recursion}
A second way is to perform an online recursive computation of  $\hat{\mathbb{U}}_t^s$ based on $\hat{\mathbb{U}}_{t-1}^s$ and $u^s_{t-1}$. This is achieved by solving the following problem  
\begin{equation} \label{Eq:quansetonline}
\begin{aligned}
\mathop{\rm minimize}\limits_{v, \rho, \bm{\theta}} \quad & \bm{1}^{\top}\bm{\theta} + \rho & \\
{\rm subject \ to}\quad
&u_{t-1}^s \in {\mathbb{U}}^s(v, \rho, \bm{\theta}), & \\
&\hat{\mathbb{U}}^s_{t-1} \subseteq \mathbb{U}^s(v, \rho, \bm{\theta}), & \\
&v \in \mathbb{U}^s,& \\
&\rho \in [0, 1], \ \bm{\theta} \in [0, 1]^{n_v^s}, \ \bm{\theta} \leq \rho\bm{1}. &
\end{aligned}
\end{equation}

The optimal solution $(v^{\star}, \rho^{\star}, \bm{\theta}^{\star})$ of \eqref{Eq:quansetonline} enables update of $\hat{\mathbb{U}}_t^s$, which is the minimum set in the form of \eqref{Eq:quanset_a} that covers the previous obtained set $\hat{\mathbb{U}}^s_{t-1}$ and the latest information sample $u_{t-1}^s$. Following the proof of Proposition~\ref{LP for quanset}, with $y=v(1-\rho)$, the optimal solution of the following LP provides the optimal estimation of the set $\tilde{\mathbb{U}}_t^s$ in the form of \eqref{Eq:optquanset}:
\begin{equation} \label{LP:quansetonline}
\begin{aligned}
\mathop{\rm minimize}\limits_{y, \rho, \bm{\theta}} \quad & \bm{1}^{\top}\bm{\theta} + \rho & \\
{\rm subject \ to}\quad
&H^s u_{t-1}^s-H^sy \leq \bm{\theta},  & \\
&H^sy^{\star}_{\rm pre} + \bm{\theta}^{\star}_{\rm pre} \leq H^sy + \bm{\theta},& \\
&H^sy \leq (1-\rho)\bm{1}, & \\
&\rho \in [0, 1],  \ \bm{\theta} \in [0, 1]^{n_v^s}, \ \bm{\theta} \leq \rho\bm{1}, &
\end{aligned}
\end{equation}
\noindent where $y^{\star}_{\rm pre}$ and $\bm{\theta}^{\star}_{\rm pre}$ are the optimal solution of \eqref{LP:quansetonline} at time step $t-1$, i.e., $\hat{\mathbb{U}}^s_{t-1}=f(H^s, \bm{\theta}_{\rm pre}^{\star}) \oplus y_{\rm pre}^{\star}$. The formulation \eqref{LP:quansetonline} is computationally efficient as the number of constraints does not change with $|\mathcal{I}_t^s|$, and it includes all samples in the set $\mathcal{I}_t^s$. The online recursion~\eqref{LP:quansetonline} is initialized with $\hat{\mathbb{U}}_0^s$, which is obtained by solving~\eqref{LP:quanset} with an initial information set $\mathcal{I}_0^s$. The set can be formulated as a small, yet nonempty, set with artificial samples.

The characteristics of the moving-horizon approach and online recursion approach will be demonstrated by a set of numerical examples in Section~\ref{sec: running examples}.

\subsection{Forward Reachability Analysis Based on $\hat{\mathbb{U}}_t^s$} \label{Computation of Occupancy of the Obstacle}
Given $\hat{\mathbb{U}}_t^s$ obtained in Section~\ref{sec:Computing set with low complexity}, and the system described as in \eqref{eq_obs_LTV_model}, forward reachability can be used to predict the obstacle occupancy. Denote by $\hat{\mathcal{O}}_{i|t}^s$ the occupancy of the center of geometry of obstacle~$s$ predicted $i$ steps ahead of the current time step $t$. The set $\hat{\mathcal{O}}_{i|t}^s$ can be computed as\cite{gao2022risk}
\begin{subequations}\label{Eq:appreachset}
\begin{align}
\hat{\mathcal{O}}_{i|t}^s & = {\rm Proj}_{\rm position}(\hat{\mathcal{R}}_{i|t}^s), \label{eq_predicted_occupancy} \\
\hat{\mathcal{R}}_{i+1|t}^s  & = A_{t+i}^s \hat{\mathcal{R}}^s_{i|t} \oplus B_{t+i}^s \hat{\mathbb{U}}_t^s, \label{eq_predicted_reachable_set} \\
\hat{\mathcal{R}}_{0|t}^s & = \left\{ x_t^s \right\}, \label{eq_initial_reachable_set}  
\end{align}
\end{subequations}
where $\hat{\mathcal{R}}_{i|t}^s$ is the forward reachable set at time step $t+i$ based on the set $\hat{\mathbb{U}}_t^s$, $x_t^s$ is the measured state of the obstacle at time step $t$, ${\rm Proj}_{\rm position}(\hat{\mathcal{R}}_{i|t}^s)$ denotes the set of positions projected from the reachable set.

\begin{figure}[!t]
\centering
\subfloat[ ]{\includegraphics[width=0.45\columnwidth]{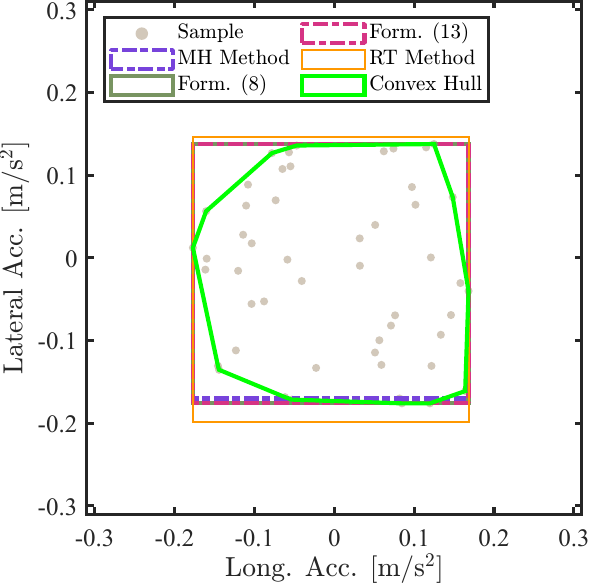}}
\hfil
\subfloat[ ]{\includegraphics[width=0.45\columnwidth]{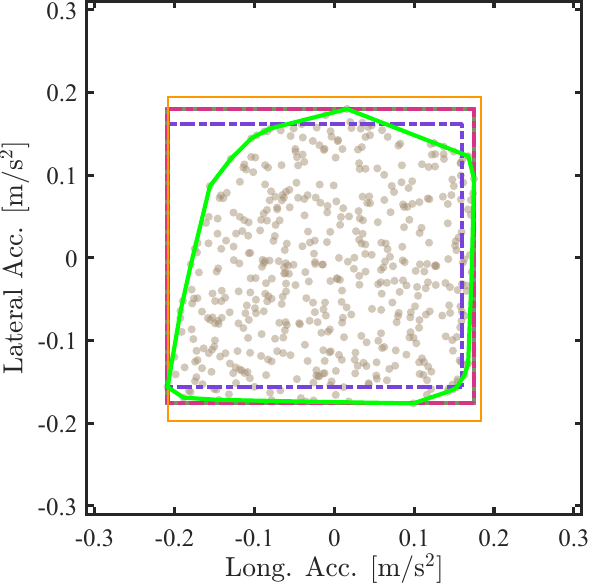}}\\
\subfloat[ ]{\includegraphics[width=0.45\columnwidth]{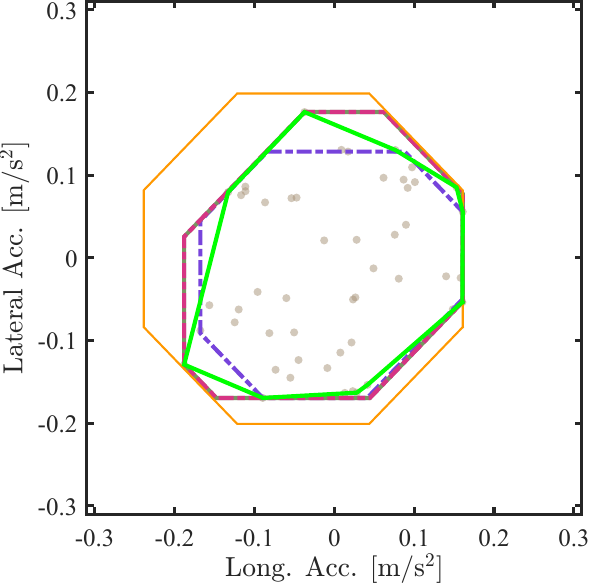}}
\hfil
\subfloat[ ]{\includegraphics[width=0.45\columnwidth]{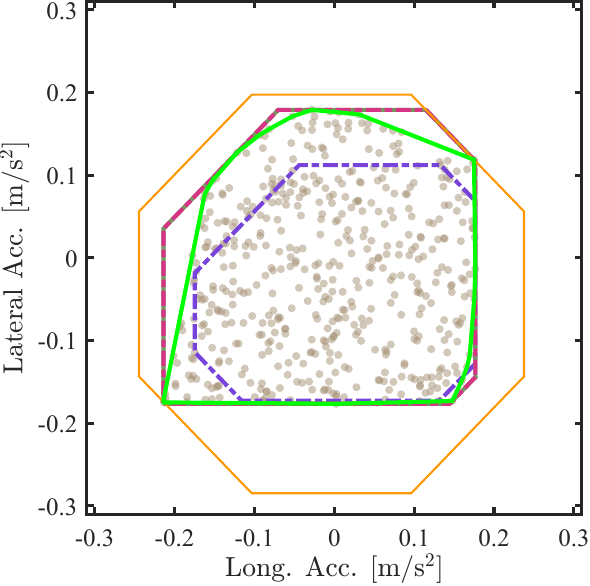}}\\
\subfloat[ ]{\includegraphics[width=0.9\columnwidth]{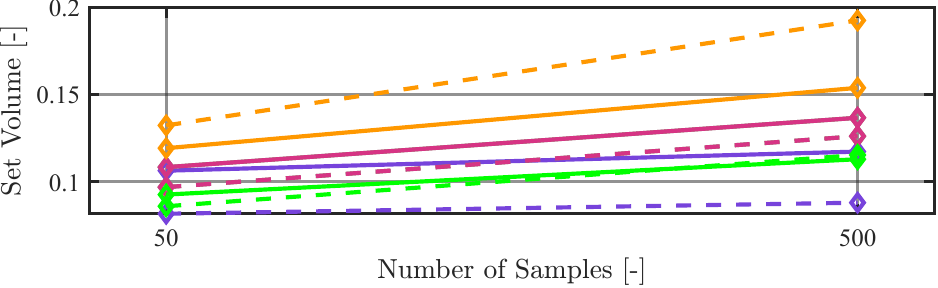}}
\centering
\caption{The learned intended control sets by formulation~\eqref{LP:quanset}, formulation~\eqref{LP:quansetonline}, the moving-horizon (MH) method, and the rigid tube (RT) method (for the MH method $L = 30$ ). (a) The number of samples is $50$, $\mathbb{U}^s$ is a square. (b) The number of samples is $500$, $\mathbb{U}^s$ is a square. (c) The number of samples is $50$, $\mathbb{U}^s$ is a regular hexagon. (d) The number of samples is $500$, $\mathbb{U}^s$ is a regular hexagon. (e) The set volume with colors according to Fig.~\ref{fig: set with different samples}(a). The dashed lines indicate the results with $\mathbb{U}^s$ as a regular hexagon, and solid lines indicate the results with $\mathbb{U}^s$ as a square.}
\label{fig: set with different samples}
\end{figure}

\begin{table*}[!t]
\scriptsize
\centering  
\caption{Computation Time of Different Methods for Learning the Set}  
\label{Table_set_est_com_time}  
\setlength{\tabcolsep}{1mm}
\begin{tabular}{lccccccccccc}  
\toprule  
\multicolumn{1}{c}{\textbf{Methods}} & \multicolumn{2}{c}{\textbf{Formulation \eqref{LP:quanset}}} & \multicolumn{2}{c}{\textbf{Formulation \eqref{LP:quansetonline}}} & \multicolumn{2}{c}{\textbf{Moving Horizon}} & \multicolumn{2}{c}{\textbf{Rigid Tube}} & \multicolumn{3}{c}{\textbf{Convex Hull}} \\
\cmidrule(r){2-3} \cmidrule(lr){4-5} \cmidrule(lr){6-7} \cmidrule(lr){8-9} \cmidrule(l){10-12}
& \multicolumn{1}{c}{Learn Set} & \multicolumn{1}{c}{Reach. Pred.} & \multicolumn{1}{c}{Learn Set} & \multicolumn{1}{c}{Reach. Pred.} & \multicolumn{1}{c}{Learn Set} & \multicolumn{1}{c}{Reach. Pred.} & \multicolumn{1}{c}{Learn Set} & \multicolumn{1}{c}{Reach. Pred.} & \multicolumn{1}{c}{Learn Set} & \multicolumn{1}{c}{Reach. Pred.} & \multicolumn{1}{c}{Vertices}  \\
\midrule
$50$ Samples, $\mathbb{U}^s$ as a square & 8.70 ${\rm ms}$ & 36.3 ${\rm ms}$ & 1.80 ${\rm ms}$ & 34.8 ${\rm ms}$ & 5.09 ${\rm ms}$& 32.4 ${\rm ms}$& 1.45 ${\rm ms}$& 32.0 ${\rm ms}$& 0.766 ${\rm ms}$& 1138 ${\rm ms}$ & 18  \\
$50$ Samples, $\mathbb{U}^s$ as a hexagon & 12.7 ${\rm ms}$ & 498 ${\rm ms}$ & 2.38 ${\rm ms}$ & 544 ${\rm ms}$ & 6.84 ${\rm ms}$& 500 ${\rm ms}$& 1.47 ${\rm ms}$& 531 ${\rm ms}$& 0.867 ${\rm ms}$ & 2477 ${\rm ms}$ & 22  \\\specialrule{0em}{1pt}{1pt}
$500$ Samples, $\mathbb{U}^s$ as a square & 40.1 ${\rm ms}$ & 38.4 ${\rm ms}$ & 1.61 ${\rm ms}$ & 35.8 ${\rm ms}$ & 4.92 ${\rm ms}$ & 34.8 ${\rm ms}$& 1.34 ${\rm ms}$& 34.2 ${\rm ms}$& 0.770 ${\rm ms}$& 23476 ${\rm ms}$ & 42 \\\specialrule{0em}{1pt}{1pt}
$500$ Samples, $\mathbb{U}^s$ as a hexagon & 101 ${\rm ms}$ & 514 ${\rm ms}$ & 2.19  ${\rm ms}$ & 560 ${\rm ms}$ & 6.97 ${\rm ms}$ & 568 ${\rm ms}$ & 1.35 ${\rm ms}$ & 555 ${\rm ms}$ & 0.737 ${\rm ms}$ & 12043 ${\rm ms}$ & 34 \\\specialrule{0em}{1pt}{1pt}
\bottomrule
\end{tabular}
\end{table*}

\subsection{Performance Evaluation}\label{sec: running examples}
This subsection evaluates different methods for learning the intended control set of an obstacle system, and the accuracy of the resultant forward reachability analysis.
\subsubsection{Comparison Between Different Methods for Learning the Intended Control Set}\label{sec: Comparison Between Different Methods for Learning the Intended Control Set}
The performance of the online recursion~\eqref{LP:quansetonline} and the moving-horizon (MH) method for learning the intended control set of the obstacle is compared with the original formulation \eqref{LP:quanset}, a rigid tube (RT) method \cite{gao2023robust}, and a method that directly learns the set as a convex hull of the samples in the information set $\mathcal{I}_t^s$ (the convex hull is computed using the Python package \texttt{pytope}~\cite{pytope}). Assume that the intended control set of the obstacle is learned by observing the longitudinal and lateral accelerations in the ground coordinate system. The results for learning the sets with different numbers of samples are shown in Fig.~\ref{fig: set with different samples}, where the admissible control set $\mathbb{U}^s$ is overly estimated as a square (4 vertices) and a regular hexagon (6 vertices), respectively. Furthermore, Table~\ref{Table_set_est_com_time} compares the computation time for learning the set and reachability prediction based on the learned set as in \eqref{Eq:appreachset}.

The analyses through Fig.~\ref{fig: set with different samples} and Table~\ref{Table_set_est_com_time} reflect that formulation~\eqref{LP:quansetonline} closely aligns with the original formulation \eqref{LP:quanset}, yet requires less computation time. Compared to the RT method, the learned set by \eqref{LP:quansetonline} is less conservative. The set obtained by the MH method is smaller than that by~\eqref{LP:quansetonline}. Notably, Fig.~\ref{fig: set with different samples} shows that the set expands when the number of samples increases, while Table~\ref{Table_set_est_com_time} shows that the computation time of \eqref{LP:quansetonline} and the MH method remains in the same order. The convex hull-based approach can more tightly contain the samples. However, it has a more complex geometry than the learned set obtained by other approaches, such that it renders a more complicated prediction of reachability. In addition, since the geometry of the convex hull is not deterministic, this is not beneficial for constructing the collision-avoidance constraint in the motion planner (see Example~\ref{exp: MPC formulation}). In comparison, other approaches can determine the geometry of the learned set by defining the shape of $\mathbb{U}^s$, such that the forward reachability prediction is more efficient. One additional observation from Table~\ref{Table_set_est_com_time} is that the shape of $\mathbb{U}^s$ can affect the computational efficiency of the reachability prediction. A more complex $\mathbb{U}^s$ has more degrees of freedom, while also increasing the computation time of reachability prediction. This implies a trade-off between accuracy and efficiency should be considered in practical applications when defining the admissible set $\mathbb{U}^s$.

\subsubsection{Responsiveness Analysis}\label{sec: response speed}
This subsection analyzes the responsiveness of the online recursion formulation~\eqref{LP:quansetonline} and the MH approach in learning the control set of an obstacle as in Section~\ref{sec: Comparison Between Different Methods for Learning the Intended Control Set}. In the setup, the obstacle performs mild actions before time step $t=50$, and from time step $t=51$ it shifts to executing more aggressive maneuvers. The volume of the learned set with the discrete time steps, as well as the 2-norm of the sampled control action ($||u_t^s||$), are presented in Fig.~\ref{fig: set volume with change action}. In addition, the learned control sets at different time steps are shown in Fig.~\ref{fig: set with mild and aggressive actions}. Note that the index of the time step in Fig.~\ref{fig: set volume with change action} is equal to the number of online samples as the method obtains one sample at every time step.

It is seen in Fig.~\ref{fig: set volume with change action} that after the system starts to take aggressive control actions from $t=51$, the learned control sets by~\eqref{LP:quansetonline} and the MH approach can both rapidly respond to the change, the delay between the set volume and the change of the control style is just $1$ step. Fig.~\ref{fig: set with mild and aggressive actions} shows how the learned sets by two approaches start from a minor initial set and grows when more samples are obtained. The set expands notably from $t=50$ to $t=54$ as a few aggressive control actions are sampled. At the end of sampling $t=100$, the set obtained by \eqref{LP:quansetonline} contains both the mild and aggressive control actions, while the set obtained by the MH approach only contains the aggressive control actions. This shows that \eqref{LP:quansetonline} predicts the motion of the obstacle based on all observed control actions of the obstacle system, while the MH approach captures the time-varying intent of the obstacle by discarding past information.

\begin{figure}[!t]
\centering
\includegraphics[width=\columnwidth]{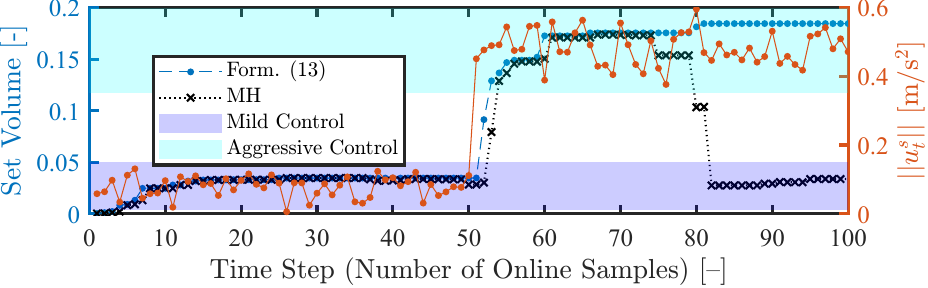}
\caption{Control actions of the system and volume of the learned set.}
\label{fig: set volume with change action}
\end{figure}
\begin{figure}[t]
\centering
\subfloat[ ]{\includegraphics[width=0.45\columnwidth]{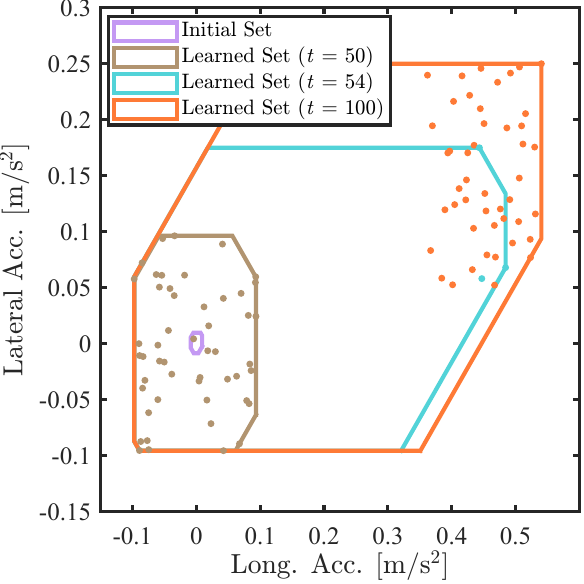}}
\subfloat[ ]{\includegraphics[width=0.45\columnwidth]{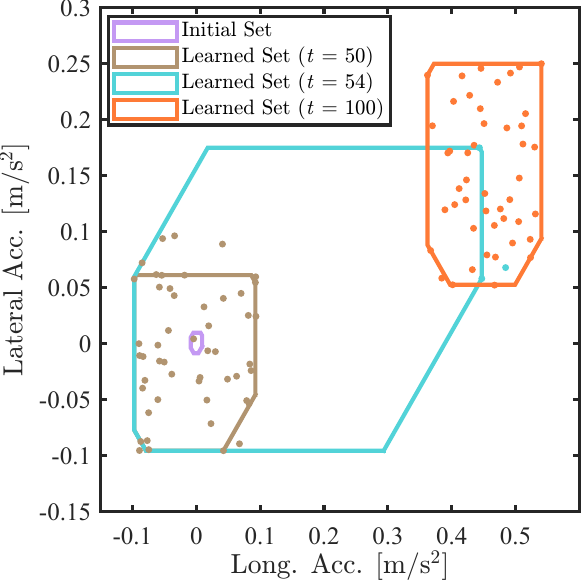}}
\centering
\caption{Learned control sets at different time steps (at $t=50$ the mild control behavior ends, and at $t=100$ the sampling ends). (a) With the online recursion formulation~\eqref{LP:quansetonline}. (b) With the MH approach.}
\label{fig: set with mild and aggressive actions}
\end{figure}
\begin{figure}[t]
\centering
\subfloat[ ]{\includegraphics[width=\columnwidth]{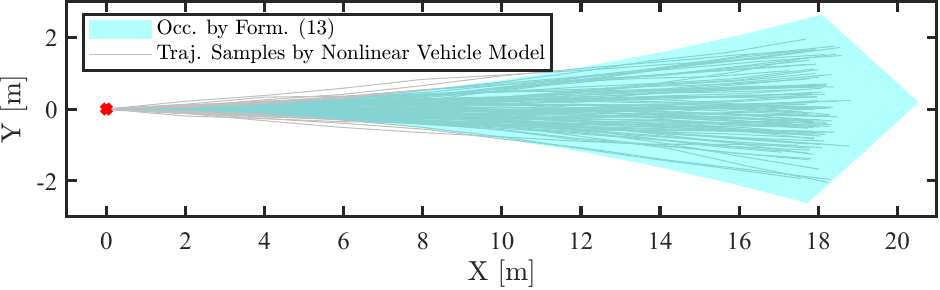}}\\
\subfloat[ ]{\includegraphics[width=\columnwidth]{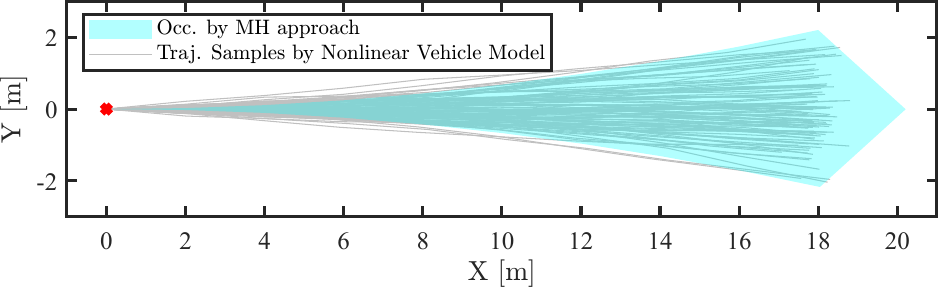}}
\centering
\caption{Predicted occupancy of the linear model~\eqref{eq_SV_model} by the proposed method based on the trajectory samples of a nonlinear vehicle model. (a) By the online recursion formulation~\eqref{LP:quansetonline}. (b) By the moving-horizon (MH) approach.}\label{fig: reach. set}
\end{figure}

\subsubsection{Reachability Analysis}\label{sec: reachability analysis}
This section evaluates the results of reachability analysis through \eqref{Eq:appreachset} based on the learned control set $\hat{\mathbb{U}}_t^s$. The actual obstacle dynamics are in this case modeled by a nonlinear kinematic single-track vehicle model as in \eqref{eq: EV model}, where the control inputs are generated from an unknown input set with an unknown distribution. The accelerations in the global coordinate system are extracted through historical trajectories to learn its intended control set by \eqref{LP:quansetonline} and the MH approach, respectively. Then it uses \eqref{Eq:appreachset} to predict the occupancy over a horizon, where the dynamics of the system are modeled by a second-order integrator as in~\eqref{eq_SV_model}. It is seen in Fig.~\ref{fig: reach. set} that the predicted occupancy using a linear model can well cover the trajectory samples generated by the nonlinear kinematic model, which indicates that the linear model-based prediction by the proposed method is a good local approximation of the nonlinear vehicle model with the choosen prediction horizon. Since the occupancy set in Fig.~\ref{fig: reach. set} can not completely cover the trajectory samples, it implies that the proposed method, in theory, does not guarantee safety in motion planning, but instead recovers approximately safe behavior by relying on past data to predict future obstacle dynamics. However, the empirical results in Section~\ref{sec: simulations} indicate that the proposed method still exhibits a high safety performance as a result of implementing the motion planner in a receding horizon manner. Note that the range of the occupancy in Fig.~\ref{fig: reach. set} depends on the maneuvers of the observed obstacle. If the control actions of the obstacle are generated from a larger input set, the predicted occupancy set will expand accordingly. To conclude, even though the nonlinear model of the obstacle and the linear second-order integrator model used for predicting the occupancy in \eqref{Eq:appreachset} are quite different, the proposed method still offers the desired accuracy in predicting the occupancy.

In addition, compared with a similar method, our proposed method has advantages in calculation time. Among the related work in Section~\ref{sec: uncertainty prediction}, the reachability analysis based on solving mixed-integer linear programming (MILP) problem \cite{driggs2018robust} exhibits the most similar characteristics to the proposed method as it does not require the distribution of behaviors and model information of the obstacle. The MILP-based method can get similar occupancy as in Fig.~\ref{fig: reach. set} by tuning the confidence level \cite[Fig. 4]{driggs2018robust}. For the same horizon, the average computation time by the proposed method is $0.0340 \ {\rm s}$ for formulation~\eqref{LP:quansetonline} and $0.0432 \ {\rm s}$ for the MH approach, with $100$ trajectories, and $0.0361 \ {\rm s}$ for formulation~\eqref{LP:quansetonline} and $0.0451 \ {\rm s}$ for the MH approach, with $500$ trajectories. For the MILP-based method, as presented in \cite[Table I]{driggs2018robust}, the average computation time  with $100$ trajectories and $500$ trajectories is $0.0322 \ {\rm s}$ and $0.1352 \ {\rm s}$. This reflects that the proposed method is more computationally efficient compared with the MILP-based approach, and the computation time remains unaffected by the number of sampled trajectories.
\begin{remark}\label{remark:persistence of excitation}
Assumption~\ref{assum:measure control} ensures the existence of a unique solution of the control input $u_{t-1} ^s$ in \eqref{eq_obs_LTV_model} to form a non-empty information set $\mathcal{I}_t^s$. Given a non-empty set $\mathcal{I}_t^s$, the convex optimization problem for determining the control set $\hat{\mathbb{U}}_t^s$, either by \eqref{LP:quanset} or \eqref{LP:quansetonline}, is feasible since the information set $\mathcal{I}_t^s$ is always contained within the admissible set $\mathbb{U}^s$. In addition, state-measurement uncertainties inherently contribute to uncertainties of $u_{t-1}^s$. The set $\hat{\mathbb{U}}_t^s$ learned with small uncertainties in $u_{t-1}^s$ would still be expected to center around the set learned with the true $u_{t-1}^s$ values. When these uncertainties are prominent, improving the accuracy of $\hat{\mathbb{U}}_t^s$ would depend on a more precise tracking system.
\end{remark}

\begin{remark}\label{remark:constant velocity}
For the double-integrator model~\eqref{eq_SV_model}, if the obstacle is observed with a constant velocity, then the estimated control input $u_{t-1}^s$ is always $\bm{0}$. In this case, the information set is a singleton, and the proposed motion-prediction method corresponds to a constant velocity model. The proposed method is designed to capture what has been observed from the obstacle.  In particular, if the obstacle only moves with a constant velocity, the method is supposed to reflect that behavior. However, as soon as the obstacle changes the driving behavior, the method can adapt to this change with one time step, as indicated in Figs.~\ref{fig: set volume with change action}--\ref{fig: set with mild and aggressive actions}. The information set transitions from a singleton to a non-singleton when a non-zero control input from the obstacle is estimated, thereby improving the robustness of the prediction over the constant velocity model.
\end{remark}

\section{Robust Motion-Planning Strategy}\label{General Robust MPC for Motion Planning}

This section formulates the robust MPC for safe motion planning of the ego system, based on the predicted obstacle occupancy $\hat{\mathcal{O}}_{i|t}^s$ defined in \eqref{eq_predicted_occupancy}. During the online motion-planning process, at each time step $t$, the ego system updates the set $\hat{\mathbb{U}}_t^s$ and the predicted occupancy $\hat{\mathcal{O}}_{i|t}^s$, and then solves the following  optimal control problem (OCP)
\begin{subequations}\label{eq: general ocp}
\begin{align}
\mathop{\rm minimize}\limits_{u^e_{i-1|t}} \quad & {V}\left(x^e_{i|t}, \ u^e_{i-1|t}, \ {r} \right) & \label{eq_ocp_a} \\
{\rm subject \ to}\quad
&x^e_{i|t} = f(x^e_{i-1|t}, u^e_{i-1|t}), & \label{eq_ocp_b} \\
&x^e_{i|t} \in \mathbb{X}^e,  &\label{eq_ocp_c}\\
&u_{i-1|t}^e \in \mathbb{U}^e, &\label{eq_ocp_d}\\
& {\rm dis}(p^e_{i|t}, \hat{\mathcal{O}}_{i|t}^s)>d^s_{\min}, \ \forall s \in \mathcal{S}, & \label{eq_ocp_e} 
\end{align}
\end{subequations}
\noindent where $i = 1, \ldots, N$. In the cost function \eqref{eq_ocp_a}, ${r}$ denotes the reference state.  The cost function ${V}(\cdot)$ is defined for measuring the performance of the motion planning. The constraints \eqref{eq_ocp_b}--\eqref{eq_ocp_d} concerning the ego system have been introduced in \eqref{eq_ego_model}. The constraint \eqref{eq_ocp_e} refers to the safety requirement on the distance between the ego system and obstacle $s$, where ${\rm dis}(\cdot)$ is a distance measure between $p^e_{i|t}$ the position of center of geometry of the ego system and the obstacle's occupancy $\hat{\mathcal{O}}_{i|t}^s$ \cite{zhang2021optimization}. It means at time step $t+i$ in the prediction horizon, the distance between $p^e_{i|t}$ and $\hat{\mathcal{O}}_{i|t}^s$ should be larger than the minimum safety distance $d^s_{\rm min}$. Note that $p_{i|t}^e$ and  $\hat{\mathcal{O}}_{i|t}^s$ predict the motion of the center of geometry of the ego and obstacle systems, such that the actual shape factors of them should be considered in designing $d^s_{\rm min}$. In addition, a large occupancy $\hat{\mathcal{O}}_{i|t}^s$ may cause infeasibility of the motion-planning problem as a result of the limited driving area. This can be handled by introducing a slack variable to \eqref{eq_ocp_e} to enhance the feasibility, and this will be enforced in Example~\ref{exp: MPC formulation} with the explicit and equivalent expressions of constraint~\eqref{eq_ocp_e}.

The motion planning of the ego system is performed by solving the OCP \eqref{eq: general ocp} at every time step $t$ to obtain the optimal control sequence $\{u^{e,\star}_{i-1|t}\}_{i=1}^{N}$, which is applied to steer the model \eqref{eq_ocp_b} to generate the reference trajectory for a lower-level controller at the current time step.

\begin{example}\label{exp: MPC formulation}
{\textbf{MPC formulation of the reach-avoid problem:}} Consider the models of EV and SV in Example~\ref{Ex:examplemodel}. Based on the occupancy $\hat{\mathcal{O}}_{i|t}^s$ of the SV, the EV model \eqref{eq: EV model}, and formulation~\eqref{eq: general ocp}, the OCP in this case is designed as 
\begin{subequations}\label{eq: specific ocp for motion planning}
\begin{align}
\mathop{\rm minimize}\limits_{u_{i-1|t}^e, \ \lambda_{i}^s, \ \varepsilon_i^s} \quad & \sum_{i=1}^{N}\left(||\delta_{i-1|t}^e||_{Q_1}^2 +
|| \eta_{i-1|t}^e||_{Q_2}^2\right) + \notag \\ & 
||E_{N|t}||_{Q_3}^2 + \sum_{i=1}^{N}\left(||\varepsilon_i^s||_{Q_4}^2\right)& \label{eq_speot_ocp_ea_a} \\
{\rm subject \ to}\quad
&x_{i|t}^e = f(x_{i-1|t}^e, \ u_{i-1|t}^e), & \label{eq_speot_ocp_ea_b}\\
&\underline{\mathcal{U}} \leq \left[v_{i|t}^e \ a_{i|t}^e \ \delta_{i-1|t}^e\right]^{\top} \leq \overline{\mathcal{U}}, &\label{eq_speot_ocp_ea_c}\\
& {p_{i|t}^e} \in \mathcal{D},& \label{eq_speot_ocp_ea_d} \\
&(H_{i|t}^sp_{i|t}^e - h_{i|t}^s)^{\top}\lambda_i^s  \geq d^s_{\rm min} - \varepsilon_i^s, &\label{eq_speot_ocp_ea_e}\\
&||(H_{i|t}^s)^{\top}\lambda_i^s||_2  = 1, \label{eq_speot_ocp_ea_f} \\
&\lambda_i^s  \in \mathbb{R}_{+}^{n_h}, \ 0\leq \varepsilon_i^s \leq d^s_{\rm min},\label{eq_speot_ocp_ea_g}
\end{align}
\end{subequations}
\noindent where $Q_1$,$\cdots$,$Q_4$ are weighting matrices, \eqref{eq_speot_ocp_ea_b} is the discrete approximation of the continuous model~\eqref{eq: EV model} using a fourth-order Runge-Kutta method with the sampling interval $T$, $u_{i-1|t}^e=[\delta_{i-1|t}^e \ \eta_{i-1|t}^e]$ is the control input of model \eqref{eq: EV model}, and $x_{i-1|t}^e$ is the state of model \eqref{eq: EV model}. Parameters $\underline{\mathcal{U}}$ and $\overline{\mathcal{U}}$ mean the lower and upper bounds on the velocity, acceleration, and front tire angle of the EV. Constraint \eqref{eq_speot_ocp_ea_d} limits the position of the center of geometry of the EV, $p_{i|t}^e = [p_{x, i|t}^e \ p_{y, i|t}^e]^{\top}$, within the driveable area $\mathcal{D}$. Constraints \eqref{eq_speot_ocp_ea_e}--\eqref{eq_speot_ocp_ea_g} are an equivalent reformulation of the collision-avoidance constraint \eqref{eq_ocp_e} with additional decision variable $\lambda_i^s$ and slack variable $\varepsilon_i^s$ \cite[Section III-B]{zhang2021optimization}. In practical applications, the variable $\varepsilon_i^s$ should be penalized with a high weight to make sure that the hard collision-avoidance constraint is violated only in extreme cases. The parameter $n_h$ is the number of edges of the set $\hat{\mathcal{O}}^s_{i|t}$, and the matrix $H^s_{i|t}$ and the vector $h^s_{i|t}$ are given by
\begin{equation*}
\hat{\mathcal{O}}^s_{i|t} = \left\{x \in \mathbb{R}^{n_p}: H^s_{i|t}x \leq h^s_{i|t} \right\}. \label{eq_O_poly}  
\end{equation*}

Define $l^e$ and $w^e$ as the length and width of the EV, $l^s$ and $w^s$ as the length and width of the SV. The safety distance $d^s_{\rm min}$ in \eqref{eq_speot_ocp_ea_e} is designed as
\begin{equation}
d^s_{\rm min} = \sqrt{(l^e/2)^2 + (w^e/2)^2} + \sqrt{(l^s/2)^2 + (w^s/2)^2}. \label{eq_circle}
\end{equation}

Given $v^{\rm f}$, $x^{\rm f}$, $y^{\rm f}$, and $\varphi^{\rm f}$ as the predefined reference velocity, longitudinal and lateral positions, and heading angle of the EV, the terminal error vector $E_{N|t}$ in \eqref{eq_speot_ocp_ea_a} is defined as 
\begin{equation}
E_{N|t}=[v^e_{N|t} - v^{\rm f} \ x^e_{N|t} - x^{\rm f} \ y^e_{N|t} - y^{\rm f} \ \varphi^e_{N|t} - \varphi^{\rm f}]. \label{eq_terminal_vec_err_EA}
\end{equation}

The formulation~\eqref{eq_circle} is designed to achieve a trade-off between the geometric accuracy in the description of the SV and the computational efficiency of the method. For higher geometric accuracy, it is necessary to predict the reachability with consideration of the orientation of the SV. However, integrating the orientation into the obstacle occupancy will make the obstacle occupancy significantly more complex. This also implies that more decision variables or constraints are needed to formulate the collision-avoidance constraints in the motion planner (see \cite[Section IV]{zhang2021optimization} and \cite[Section IV.A]{fan2024efficient}). To this end, \eqref{eq_circle} is applied to approximate the geometry of the SV. Although this approach may introduce possible conservatism, the potential feasibility issues are addressed in practical applications by using soft constraints as in \eqref{eq_speot_ocp_ea_e}.
\end{example}

\section{Simulations of Reach-Avoid Motion Planning}\label{sec: simulations}
\begin{figure}[t]
\centering
\includegraphics[width=0.8\columnwidth]{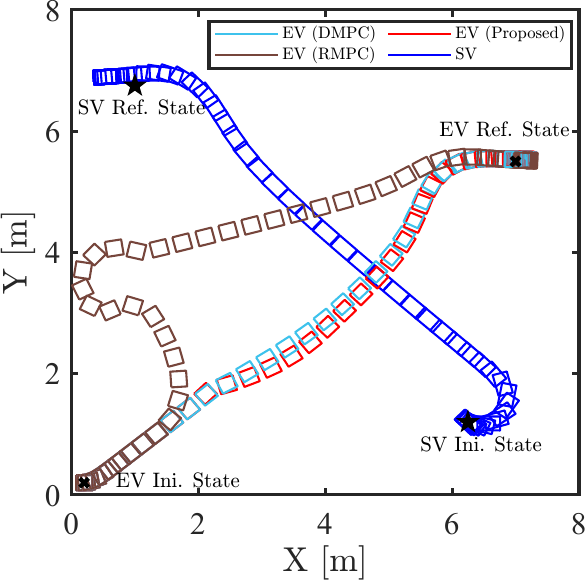}
\caption{Comparison of the planned paths of EV by different approaches in the reach-avoid scenario. The initial state of the EV is $(0.2 \ {\rm m}, \ 0.2 \ {\rm m}, \ 0 \ {\rm rad}, \ 0 \ {\rm m/s}, \ 0 \ {\rm m/s^2})$, the reference is $(7 \ {\rm m}, \ 5.5 \ {\rm m}, \ 0 \ {\rm rad}, \ 0 \ {\rm m/s}$). The initial state of the SV is $(6.25 \ {\rm m},\ 1.2 \ {\rm m}, \ -1/4\pi \ {\rm rad}, \ 0 \ {\rm m/s})$, and the reference state is $(1 \ {\rm m}, \ 6.75 \ {\rm m}, \ \pi \ {\rm rad}, \ 0 \ {\rm m/s})$.}
\label{fig:SIM4_path_comparison_two_vehicles}
\end{figure}

\begin{table}[!t]
\centering
\caption{General Parameters in Simulations}  
\label{Table_exp_parameters robotic} 
\begin{tabular}{cccc}
\toprule
\textbf{Symbol} & \textbf{Value} & \textbf{Symbol} & \textbf{Value} \\
\midrule
$l_f$, $l_r$ & $0.08 \ {\rm m}$, $0.08 \ {\rm m}$ & $T$& $0.25 \ {\rm s}$ \\ \specialrule{0em}{1pt}{1pt}
$l^e$, $w^e$ & $0.26 \ {\rm m}$, $0.25 \ {\rm m}$ & $l^s$, $w^s$ & $0.36 \ {\rm m}$, $0.23 \ {\rm m}$ \\ \specialrule{0em}{1pt}{1pt}
$N$ & 10 & $Q_1$, $Q2$ & $1$, $1$ \\ \specialrule{0em}{1pt}{1pt}
$Q_3$ &  $\rm{diag}([1\ 5 \ 5 \ 2])$ & $Q_4$ &  $300$ \\ \specialrule{0em}{1pt}{1pt}
$\underline{\mathcal{U}}$ & $[-1.5 \ -0.5 \ -0.3]^{\top}$ & $\overline{\mathcal{U}}$& $[1.5  \ 0.5 \ 0.3]^{\top}$\\ \specialrule{0em}{1pt}{1pt}
\bottomrule
\end{tabular}
\begin{tablenotes}\tiny
\item ${\star}$ The units in $\overline{\mathcal{U}}$ and $\underline{\mathcal{U}}$ are ${\rm m/s}$, ${\rm m/s^2}$, and ${\rm rad}$, respectively.
\end{tablenotes} 
\end{table}

This section presents simulation results for reach-avoid planning as shown in Fig.~\ref{fig:Motivation_Example}, where the EV and SV are specified as a car-like mobile robot in this case. The implementations can be found in our published code\footnote{{\tiny{https://github.com/JianZhou1212/robust-mpc-motion-planning-by-learning-obstacle-uncertainties}}}. The mission of the EV is to follow a predefined reference state while avoiding collision with the SV and borders of the driveable area. It is assumed that the SV holds a higher priority, requiring the EV to actively take measures to avoid collisions. The controller of the SV is simulated by a nonlinear MPC to track its reference target, where the reference target and the model and controller parameters are unknown to the EV. The SV takes control input from an unknown set to plan a reference trajectory to reach its reference target. The SV's dynamics are not available to the EV, from the perspective of the EV, the SV is modeled by the linear model~\eqref{eq_SV_model}.

In the numerical studies, the computation of the occupancy of the SV firstly requires formulating an admissible control set $\mathbb{U}^s$ according to Assumption~\ref{assum:poly_U}. Since the simulation runs for just $55$ time steps, the size of the information set $\mathcal{I}_t^s$ is not overly big. Therefore, the online recursion~\eqref{LP:quansetonline} is more suitable for learning the unknown intended control set of SV in this case. This is initialized by an initial information set $\mathcal{I}_0^s$ that contains some artificial samples to construct a small but non-empty set $\hat{\mathbb{U}}_0^s$ by solving \eqref{LP:quanset}. This implies that the proposed method does not have any prior information on the uncertainties of the SV. Then, for $t \in \mathbb{N}_{+}$ the set $\hat{\mathbb{U}}_t^s$ is obtained by solving \eqref{LP:quansetonline} based on $\hat{\mathbb{U}}_{t-1}^s$ and $u_{t-1}^s$, where $\hat{\mathbb{U}}_{t-1}^s$ is obtained at time step $t-1$, and $u_{t-1}^s$ contains the measured longitudinal and lateral accelerations of the SV in the ground coordinate system at time step $t-1$. The learned set $\hat{\mathbb{U}}_t^s$ is substituted into \eqref{Eq:appreachset} to obtain $\hat{\mathcal{O}}_{i|t}^s$, i.e., the predicted occupancy of the SV, for planning of the EV by solving \eqref{eq: specific ocp for motion planning}.

The proposed planner is compared with a robust MPC (RMPC) and a deterministic MPC (DMPC), where the RMPC and DMPC are implemented by replacing $\hat{\mathbb{U}}_t^s$ in \eqref{eq_predicted_reachable_set} with $\mathbb{U}^s$ and ${\bm 0}$, respectively. For the DMPC, the obstacle system~\eqref{eq_SV_model} is predicted with ${\bm 0}$ input, such that the prediction is equivalent to a constant velocity model. The set addition and polytope computations were implemented by the Python package \texttt{pytope} \cite{pytope}. The optimization problems involved were solved by \texttt{CasADi} \cite{andersson2019casadi} and \texttt{Ipopt} \cite{wachter2006implementation} using the linear solver \texttt{MA57} \cite{hsl2021collection}. The simulations were performed on a standard laptop running Ubuntu 22.04 LTS and Python 3.10.12. Table~\ref{Table_exp_parameters robotic} provides a summary of simulation parameters. 

\begin{figure}[!t]
\centering
\subfloat[ ]{\includegraphics[width=\columnwidth]{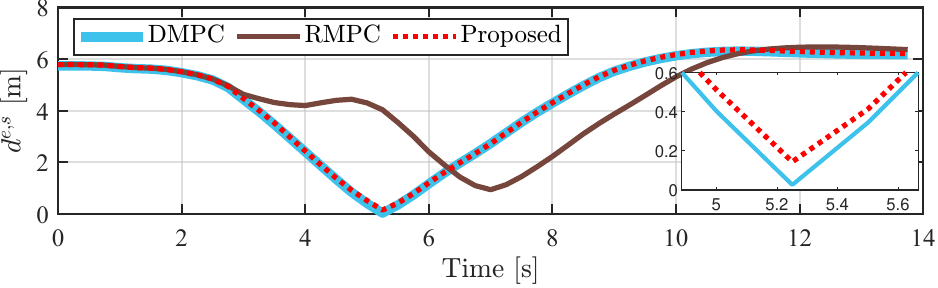}}
\vfil
\subfloat[ ]{\includegraphics[width=\columnwidth]{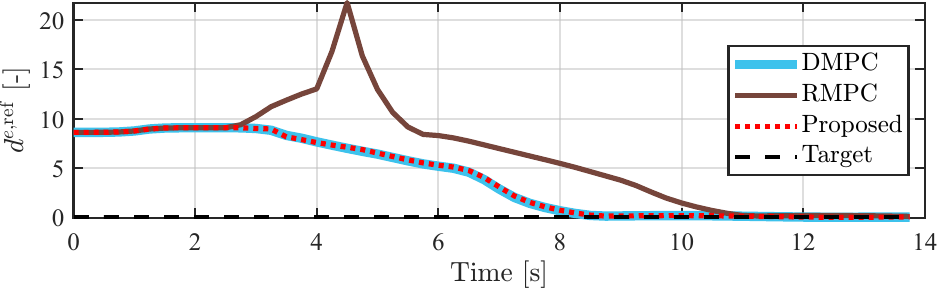}}
\vfil
\subfloat[ ]{\includegraphics[width=\columnwidth]{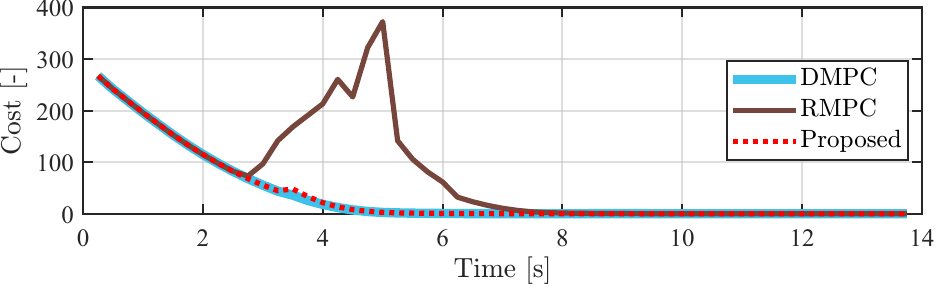}}
\centering
\caption{Comparison of $d^{e, s}$, $d^{e, \rm ref}$, and the cost during the motion. (a) The polytope distance between EV and SV ($d^{e, s}$) during the motion (the minimum EV-SV distance with DMPC is $0.03 \ {\rm m}$, and that with the proposed method is $0.14 \ {\rm m}$). (b). The Euclidean distance between EV and its reference target ($d^{e, \rm ref}$) during the motion. (c). The cost function value.}
\label{fig:SIM4_distance_comparison_two_vehicle}
\end{figure}

\begin{table*}[!t]
\tiny
\centering  
\caption{Comparison of the Performance of Three Planners in Stochastic Simulations}  
\label{Table_stochastic_comparison_robot}  
\setlength{\tabcolsep}{1mm}
\begin{threeparttable}
\begin{tabular}{ccccccccccccccccc}  
\toprule  
\multicolumn{1}{c}{Name} & \multicolumn{2}{c}{Collision-free rate} & \multicolumn{2}{c}{Complete rate} & \multicolumn{2}{c}{Mean $d_{\rm min}^{e,s}$} & \multicolumn{2}{c}{Min. $d_{\rm min}^{e,s}$} & \multicolumn{2}{c}{Mean $\tau^{e, \rm ref}$} & \multicolumn{2}{c}{Max. $\tau^{e, \rm ref}$} & \multicolumn{2}{c}{Mean $J_{\rm sum}$}& \multicolumn{2}{c}{Max.  $J_{\rm sum}$} \\
\cmidrule(lr){2-3} \cmidrule(lr){4-5} \cmidrule(lr){6-7} \cmidrule(lr){8-9} \cmidrule(lr){10-11} \cmidrule(lr){12-13} \cmidrule(lr){14-15} \cmidrule(lr){16-17} 
& \multicolumn{1}{c}{\(N=10\)} & \multicolumn{1}{c}{\(N=8\)} & \multicolumn{1}{c}{\(N=10\)} & \multicolumn{1}{c}{\(N=8\)} & \multicolumn{1}{c}{\(N=10\)} & \multicolumn{1}{c}{\(N=8\)} & \multicolumn{1}{c}{\(N=10\)} & \multicolumn{1}{c}{\(N=8\)} & \multicolumn{1}{c}{\(N=10\)} & \multicolumn{1}{c}{\(N=8\)} & \multicolumn{1}{c}{\(N=10\)} & \multicolumn{1}{c}{\(N=8\)} & \multicolumn{1}{c}{\(N=10\)} & \multicolumn{1}{c}{\(N=8\)} & \multicolumn{1}{c}{\(N=10\)} & \multicolumn{1}{c}{\(N=8\)} \\
\midrule
Proposed & $100\%$ & $100\%$ & $100\%$ & $100\%$ & $0.105 \ {\rm m}$ & $0.081\ {\rm m}$ & $0.057\ {\rm m}$ & $0.039\ {\rm m}$ & $8.75 \ {\rm s}$ & $10.5 \ {\rm s}$  & $8.75 \ {\rm s}$ & $10.5 \ {\rm s}$ & $2001$ & $2576$ & $2301$ & $2837$ \\ 
RMPC & $98.3\%$ & $100\%$ & $65.1\%$ & $80\%$ & $0.689 \ {\rm m}$ & $0.184\ {\rm m}$ & $0.132\ {\rm m}$ & $0.083\ {\rm m}$ & $11.3 \ {\rm s}$ & $11.8 \ {\rm s}$  & $13.75\ {\rm s}$ & $13.25 \ {\rm s}$ & $4496$ & $3452$ & $5961$ & $4702$ \\ 
DMPC & $40.3\%$ & $30.7\%$ & $100\%$ & $100\%$ & $0.020\ {\rm m}$ & $0.019\ {\rm m}$ & $0.010\ {\rm m}$ & $0.010\ {\rm m}$ & $8.75\ {\rm s}$ & $10.5\ {\rm s}$ & $8.75\ {\rm s}$ & $10.5\ {\rm s}$& $1967$ & $2553$  & $1983$ & $2680$\\
\bottomrule
\end{tabular}
\begin{tablenotes}
\item ${\star}$ The complete rate, $d_{\rm min}^{e, s}$, $\tau^{e, \rm ref}$, and $J_{\rm sum}$ are counted among the collision-free cases.
\item ${\star}$ A collision is counted if the distance between the EV and SV is less than or equal to $0.01 \ {\rm m}$ during the simulation.
\end{tablenotes}
\end{threeparttable}
\end{table*}

\begin{table*}[!t]
\scriptsize
\centering  
\caption{Comparison of Computational Performance of Three Methods}  
\label{Table_methods_com_time} 
\setlength{\tabcolsep}{1mm}
\begin{tabular}{ccccccccc}
\toprule  
\textbf{Methods} & \multicolumn{2}{c}{\textbf{Learn Set \eqref{LP:quansetonline}}} & \multicolumn{2}{c}{\textbf{Reach. Prediction} \eqref{Eq:appreachset}} & \multicolumn{2}{c}{\textbf{Optimization \eqref{eq: specific ocp for motion planning}}}  & \multicolumn{2}{c}{\textbf{Total}}  \\
\cmidrule(r){2-3} \cmidrule(lr){4-5} \cmidrule(lr){6-7}  \cmidrule(lr){8-9} 
& \multicolumn{1}{c}{$N=10$} & \multicolumn{1}{c}{$N=8$} & \multicolumn{1}{c}{$N=10$} & \multicolumn{1}{c}{$N=8$} & \multicolumn{1}{c}{$N=10$} & \multicolumn{1}{c}{$N=8$}  & \multicolumn{1}{c}{$N=10$} & \multicolumn{1}{c}{$N=8$} \\
\midrule
Proposed & 1.24 $\pm$ 0.806 $\rm{ms}$& 1.23 $\pm$ 0.802 $\rm{ms}$ & 48.7 $\pm$ 2.19 $\rm{ms}$ & 28.0 $\pm$ 2.56 $\rm{ms}$ & 48.4 $\pm$ 15.6 $\rm{ms}$& 42.5 $\pm$ 14.0 $\rm{ms}$ & 98.5 $\pm$ 15.8 $\rm{ms}$ 
& 72.0 $\pm$ 14.5 $\rm{ms}$ \\\specialrule{0em}{1pt}{1pt}
DMPC & None & None & 5.87 $\pm$ 1.59 $\rm{ms}$ & 4.90 $\pm$ 1.63 $\rm{ms}$ & 50.7 $\pm$ 21.4 $\rm{ms}$& 44.9 $\pm$ 15.1 $\rm{ms}$ & 56.7 $\pm$ 21.5 $\rm{ms}$ 
& 49.8 $\pm$ 15.2 $\rm{ms}$ \\\specialrule{0em}{1pt}{1pt}
RMPC & None & None & 48.8 $\pm$ 2.53 $\rm{ms}$ & 27.9 $\pm$ 1.97 $\rm{ms}$ & 62.4 $\pm$ 29.1 $\rm{ms}$& 42.9 $\pm$ 22.2 $\rm{ms}$ & 111.3 $\pm$ 29.3 $\rm{ms}$ 
& 70.9 $\pm$ 22.3 $\rm{ms}$ \\
\bottomrule
\end{tabular}
\begin{tablenotes}\tiny
\item ${\star}$ The results are presented with Mean $\pm$ One standard deviation.
\end{tablenotes} 
\end{table*}

The methods are first implemented in a single case to get insights into the performance of each planner. The comparison is based on the planned paths, the polytope distance between the EV and SV ($d^{e, s}$), the Euclidean distance between the EV and its reference state ($d^{e, \rm ref}$), which reflects the convergence speed to the reference, and the cost function value. The results are shown in Fig.~\ref{fig:SIM4_path_comparison_two_vehicles} and Fig.~\ref{fig:SIM4_distance_comparison_two_vehicle}, respectively. It is observed in Fig.~\ref{fig:SIM4_path_comparison_two_vehicles} that the three methods perform differently in the same scenario as a result of different considerations of uncertainties. Further analysis from Fig.~\ref{fig:SIM4_distance_comparison_two_vehicle}(a) shows that the minimum distance between EV and SV with DMPC is close to $0 \ {\rm m}$, and that with the proposed method is $0.14 \ {\rm m}$, which is sufficient in this case considering that the length of the EV is $0.26 \ {\rm m}$. In contrast, RMPC can generate the safest reference trajectory as it maintains a larger distance between EV and SV. However,  the cost of pursuing robustness is that the EV needs a longer time and a larger cost to converge to the reference state, as shown in Figs.~\ref{fig:SIM4_distance_comparison_two_vehicle}(b)--(c), while the DMPC and the proposed method converge faster with a smaller cost. 

The three methods are further compared through $300$ Monte-Carlo simulations with a randomly sampled initial state for the SV, while other conditions are the same as in Fig.~\ref{fig:SIM4_path_comparison_two_vehicles}. The planners are implemented with the prediction horizons $N=10$ and $N=8$, respectively, and are compared by counting the collision-free rate of planning, where a collision-free mission is fulfilled if the EV has no collision with either the SV or the driving-area boundary. Among all successful cases, the complete cases are counted, where a complete mission means that the Euclidean distance between the state of the EV and its reference state converges to less than or equal to $0.2$ within a predefined time limitation ($13.75 \ {\rm s}$ in the simulation). For all completed cases with each planner, the minimum polytope distance between the EV and SV of each simulation ($d_{ \rm min}^{e, s}$), the first time instant when the EV reaches the reference state ($\tau^{e, \rm ref}$), and the sum of the cost function value ($J_{\rm sum}$) are compared. The results are presented in Table~\ref{Table_stochastic_comparison_robot}. It is seen that only the proposed method achieves a collision-free rate of $100\%$ for both prediction horizons. Note that the RMPC does not achieve $100\%$ with $N=10$ because the obstacle occupancy expands with the prediction horizon. This reduces the driveable space, i.e., the feasible region, for the EV in the limited area, thereby causing collisions with the boundary in some cases. Among all collision-free cases, the proposed method and DMPC have a $100\%$ complete rate, while the RMPC does not fulfill all the tasks. Analyzing the random variables $d_{ \rm min}^{e, s}$, $\tau^{e, \rm ref}$, and $J_{\rm sum}$ reveals that both the proposed method and DMPC guide the EV to the reference state at the same time instant, and the costs are close. However, the proposed method demonstrates superior safety compared to DMPC. On the other hand, RMPC ensures a large distance between the EV and SV in each simulation, but it takes more time and a larger cost to converge. For the proposed method, although the prediction horizons $N=10$ and $N=8$ both result in $100\%$ collision-free rate and complete rate, the shorter horizon can cause a smaller minimum distance between the EV and SV, and a larger cost of the maneuver. The simulation results demonstrate that the proposed method performs successfully in the absence of prior knowledge, e.g., the unknown dynamics, unknown distribution of control actions, and uncertain intentions of the obstacle. Empirically, the results show that the proposed approach is safe in all considered instances, while simultaneously enhancing efficiency and feasibility and reducing the cost associated with motion-planning tasks compared with the RMPC. In contrast, when compared to the DMPC, it markedly increases safety without incurring additional costs or compromising the efficiency of executing the tasks. 

The computation time for each iteration of the proposed method, DMPC, and RMPC with different prediction horizons is summarized in Table~\ref{Table_methods_com_time}. The results were recorded from $20$ random simulations where each MPC algorithm was run $55$ steps. It is seen in Table~\ref{Table_methods_com_time} that DMPC is the most computationally efficient, while the prediction horizon has an obvious influence on the computational efficiency of the proposed method and RMPC. In addition, for the proposed approach, the majority of the computation load comes from predicting the forward reachable set~\eqref{Eq:appreachset} and solving the MPC problem~\eqref{eq: specific ocp for motion planning}. The computation time for solving ~\eqref{eq: specific ocp for motion planning} increases slightly with a longer prediction horizon, while the polytope-based computation of occupancy sets using ~\eqref{Eq:appreachset} can
be computationally expensive when the prediction horizon is long. Table~\ref{Table_stochastic_comparison_robot} shows that the computation time is sufficient for running the planner at around ${\rm {10 \ Hz}}$ in simulations, while it further indicates that a trade-off between the performance and the computational efficiency should be carefully considered when defining the prediction horizon in practical applications.

\begin{remark}\label{remark: choose RMPC and DMPC}
The DMPC and RMPC are selected as the baseline methods for two primary reasons. First, in the studied reach-avoid problem, the DMPC excels in planning the most efficient reference trajectory when the motion-planning problem is feasible, while the RMPC is known for its guaranteed safety when the problem is feasible. The evaluations of the proposed method are demonstrated by comparing it with the efficiency-optimal method and the safety-optimal method, respectively. Second, similar to the proposed method, the DMPC and RMPC rely on a few assumptions on the traffic scenario. Other methods such as risk-aware optimal control \cite{gao2022risk}, interaction and safety-aware MPC \cite{zhou2022interactionnew}, and non-conservative stochastic MPC \cite{benciolini2023non}, \cite{brudigam2023stochastic}, require certain assumptions on uncertainty distributions of the SVs, which are not satisfied in our case studies. Therefore, DMPC and RMPC provide a fair comparison for assessing the proposed method's performance.
\end{remark}

\section{Hardware Experiments}\label{sec: hardware exp}
\begin{figure}[!t]
\centering
\includegraphics[height=0.4\columnwidth]{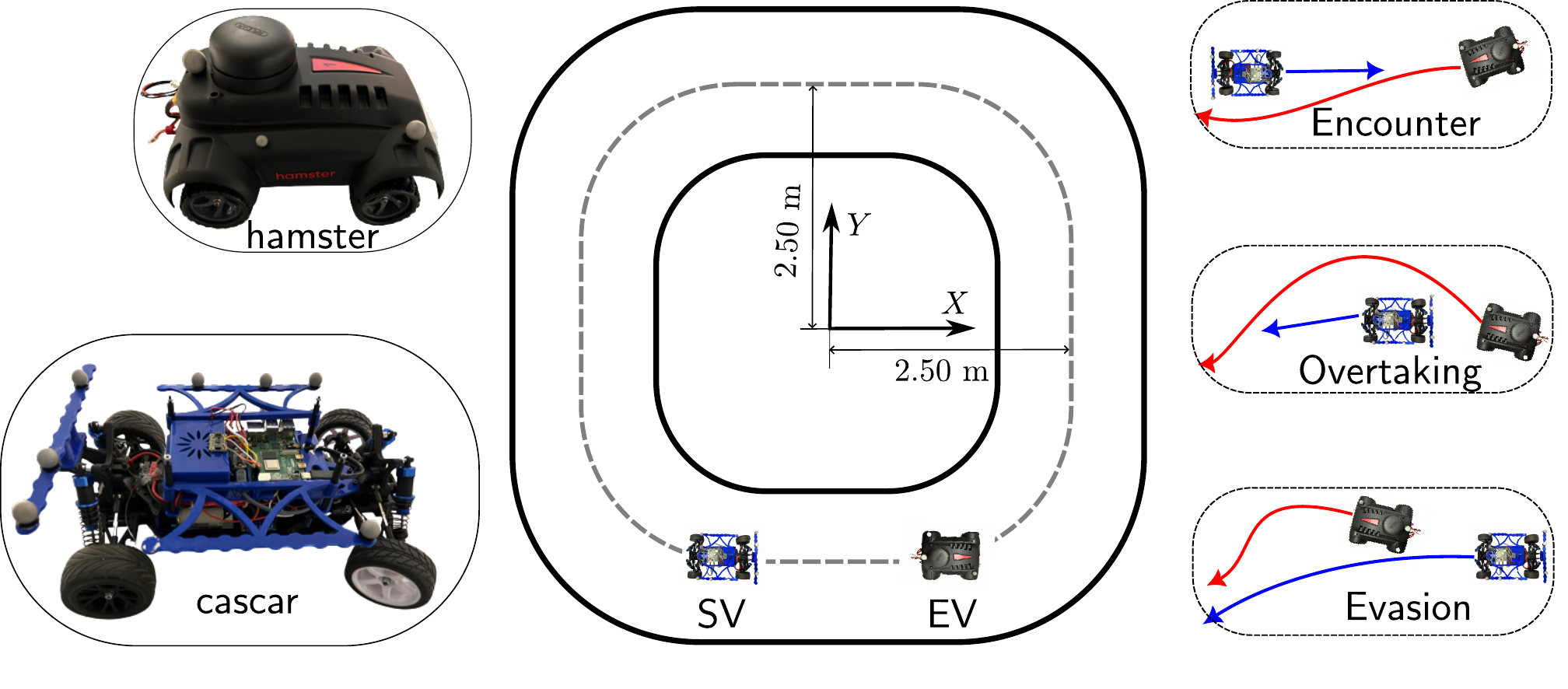}
\caption{The experiment scenarios of motion planning of an EV moving along a track with an SV.}
\label{fig:two_vehicle_interaction}
\end{figure}

\begin{table}[!t]\tiny
\centering
\caption{Experimental Parameters in Each Scenario}  
\label{Table_spe_exp_parameters robotic} 
\begin{tabular}{llcc}
\toprule
\multicolumn{1}{c}{\textbf{Scenario}} & \multicolumn{1}{c}{\textbf{Initial States}} & \textbf{Ref. V. of EV} & \textbf{Ref. V. of SV} \\
\midrule
Encounter & \makecell[l]{EV: $2.50 \ {\rm m}$, $0 \ {\rm m}$, $\pi/2 \ {\rm rad}$, $0 \ {\rm m/s}$ \\ SV: $-2.50 \ {\rm m}$, $0 \ {\rm m}$, $\pi/2 \ {\rm rad}$, $0 \ {\rm m/s}$} & $0.45 \ {\rm m/s}$& $0.45 \ {\rm m/s}$ \\ \specialrule{0em}{1pt}{1pt}
\hline
\specialrule{0em}{1pt}{1pt}
Overtaking & \makecell[l]{EV: $2.50 \ {\rm m}$, $-0.40 \ {\rm m}$, $\pi/2 \ {\rm rad}$, $0 \ {\rm m/s}$ \\ SV: $1.60 \ {\rm m}$, $2.50 \ {\rm m}$, $\pi \ {\rm rad}$, $0 \ {\rm m/s}$} & $0.45 \ {\rm m/s}$& $0.25 \ {\rm m/s}$ \\ \specialrule{0em}{1pt}{1pt}
\hline
\specialrule{0em}{1pt}{1pt}
Evasion & \makecell[l]{EV: $1.50 \ {\rm m}$, $2.50 \ {\rm m}$, $\pi \ {\rm rad}$, $0 \ {\rm m/s}$ \\ SV: $2.50 \ {\rm m}$, $0.90 \ {\rm m}$, $\pi/2 \ {\rm rad}$, $0 \ {\rm m/s}$} & $0.35 \ {\rm m/s}$& $0.70 \ {\rm m/s}$ \\ \specialrule{0em}{1pt}{1pt}
\bottomrule
\end{tabular}
\begin{tablenotes}\tiny
\item ${\star}$ The initial states from left to right are longitudinal and lateral positions, heading angle, and velocity. 
\item ${\star}$ The reference velocity of the SV is unknown to the EV.
\end{tablenotes} 
\end{table}

\begin{figure}[!t]
\centering
\subfloat[ ]{\includegraphics[width=0.9\columnwidth]{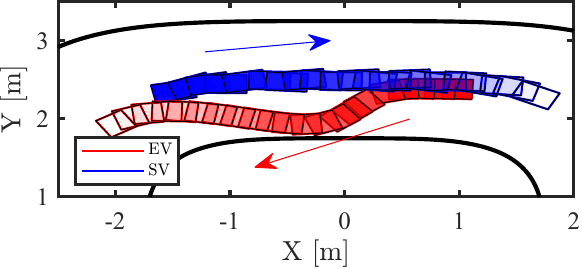}}
\vfil
\subfloat[ ]{\includegraphics[width=0.9\columnwidth]{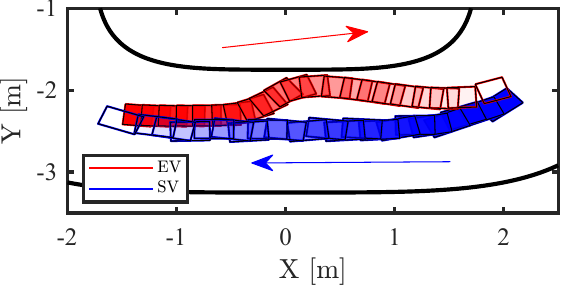}}
\centering
\caption{Measured trajectories of EV and SV during the encounter scenario. (a) The first encounter (from $8.28 \ {\rm s}$ to $15.42 \ {\rm s}$). (b) The second encounter (from $26.33 \ {\rm s}$ to $33.92 \ {\rm s}$).}
\label{fig: EXP1_Encounter_Path}
\end{figure}

\begin{figure}[!t]
\centering
\subfloat[ ]{\includegraphics[width=0.7\columnwidth]{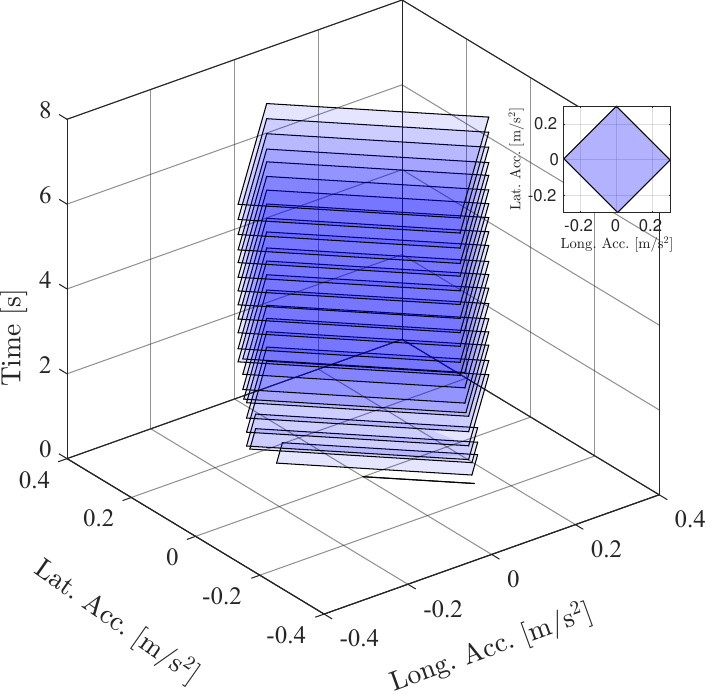}}
\vfil
\subfloat[ ]{\includegraphics[width=0.7\columnwidth]{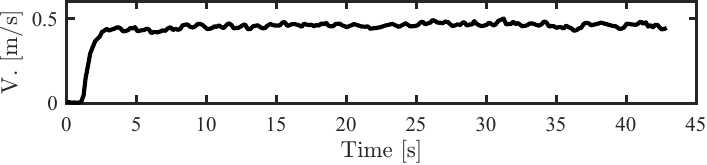}}
\centering
\caption{Information about SV during the encounter scenario. (a) The learned set $\hat{\mathbb{U}}^s_t$ over time in the encounter experiment (the subfigure indicates the final set $\hat{\mathbb{U}}^s_t$). (b)The velocity of SV in the ground coordinate system.}
\label{fig: Set_with_time_encounter}
\end{figure}

\begin{figure*}[!t]
\centering
\subfloat{\includegraphics[height=0.4\columnwidth]{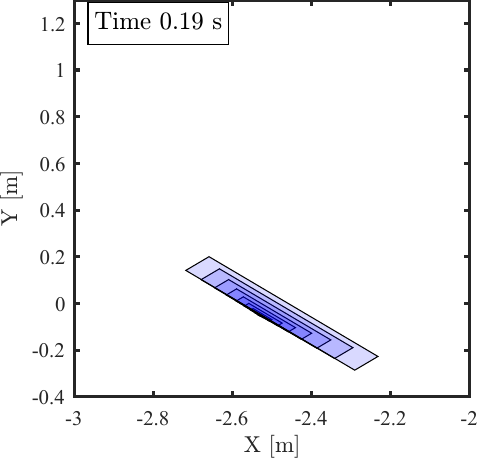}}
\subfloat{\includegraphics[height=0.4\columnwidth]{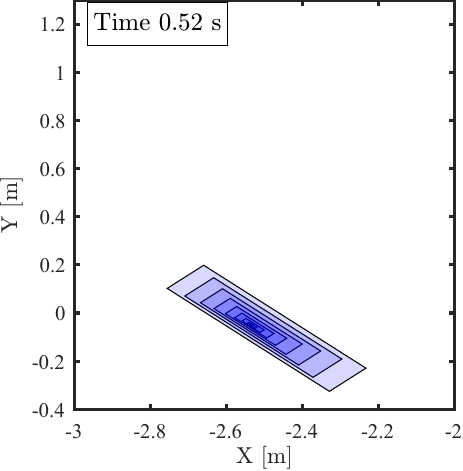}}
\subfloat{\includegraphics[height=0.4\columnwidth]{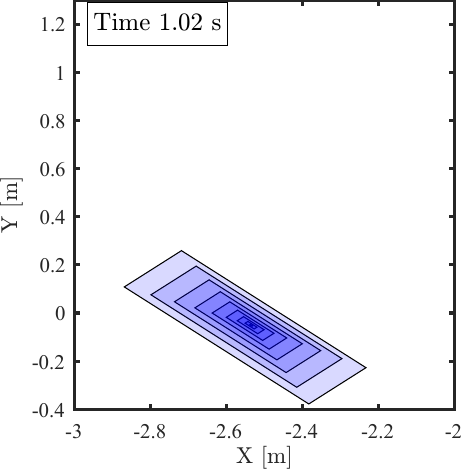}}
\subfloat{\includegraphics[height=0.4\columnwidth]{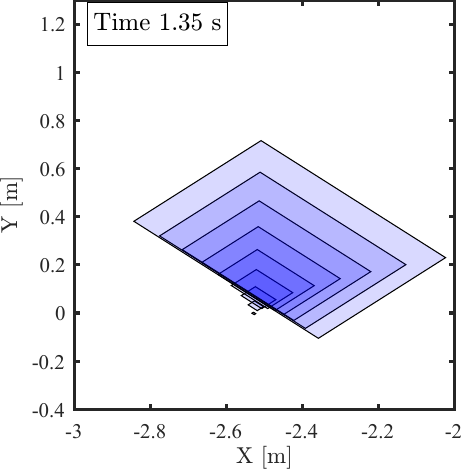}} 
\subfloat{\includegraphics[height=0.4\columnwidth]{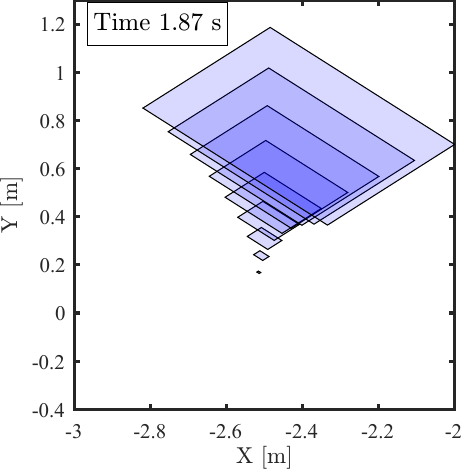}} 
\centering
\caption{Visualization of the predicted occupancy $\hat{\mathcal{O}}_{i|t}^s$ at the initial moments in the encounter scenario.}
\label{fig: EXP_Encounter_Key_Moments_Occ}
\end{figure*}

\begin{figure}[!t]
\centering
\subfloat{\includegraphics[height=0.4\columnwidth]{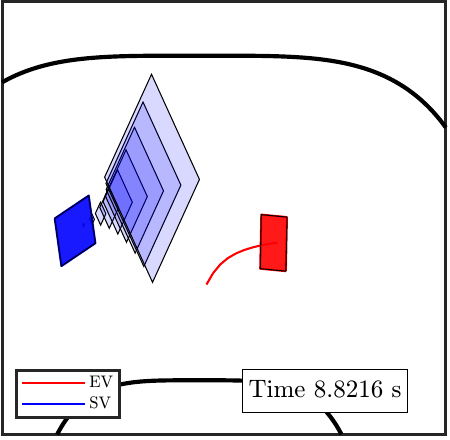}}
\subfloat{\includegraphics[height=0.4\columnwidth]{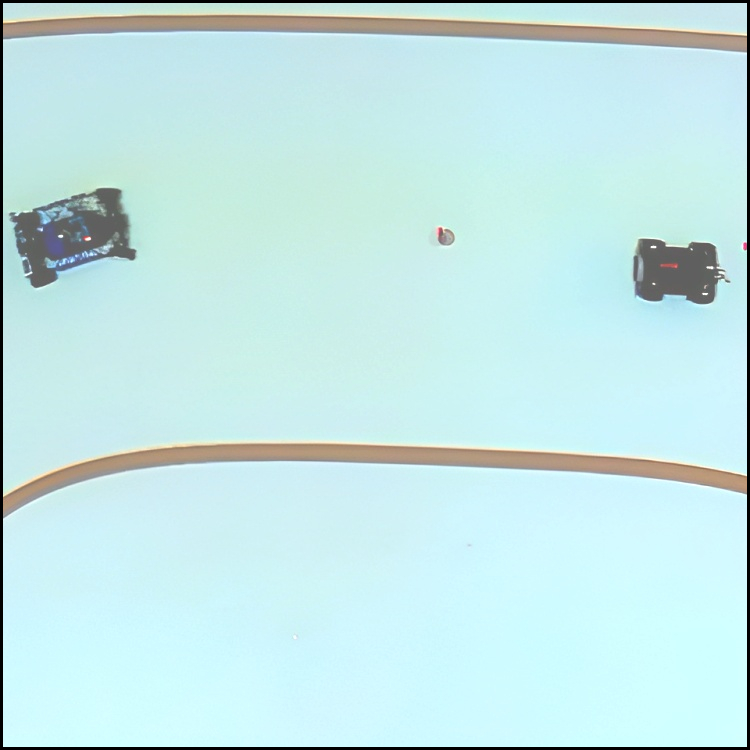}} \\
\subfloat{\includegraphics[height=0.4\columnwidth]{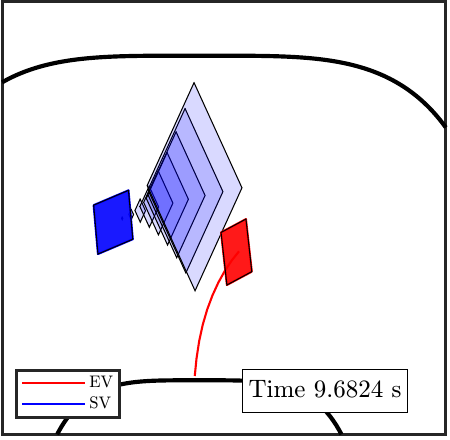}}
\subfloat{\includegraphics[height=0.4\columnwidth]{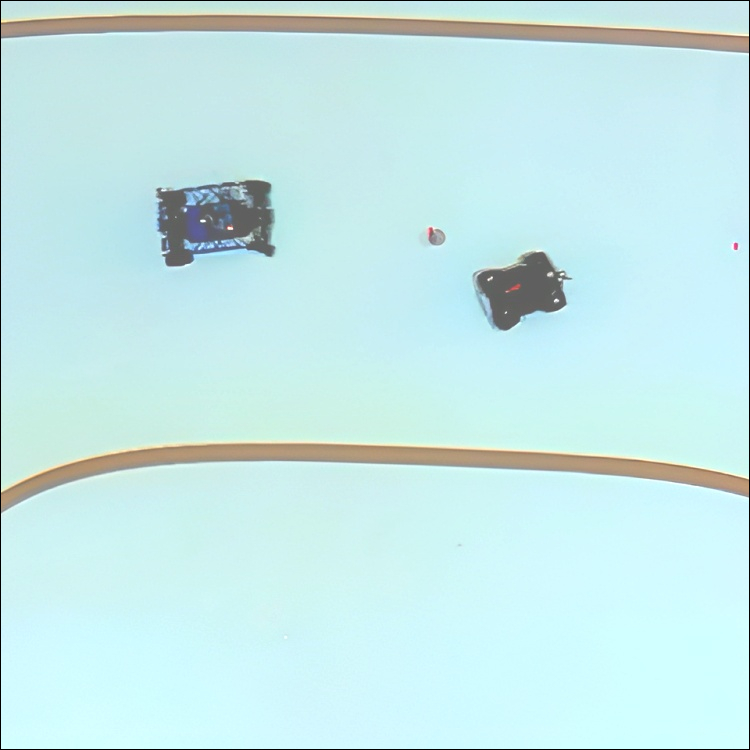}} \\
\subfloat{\includegraphics[height=0.4\columnwidth]{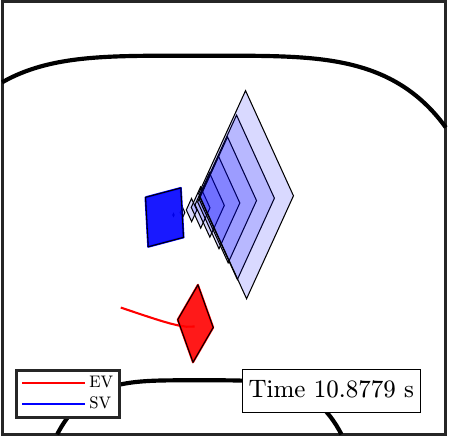}}
\subfloat{\includegraphics[height=0.4\columnwidth]{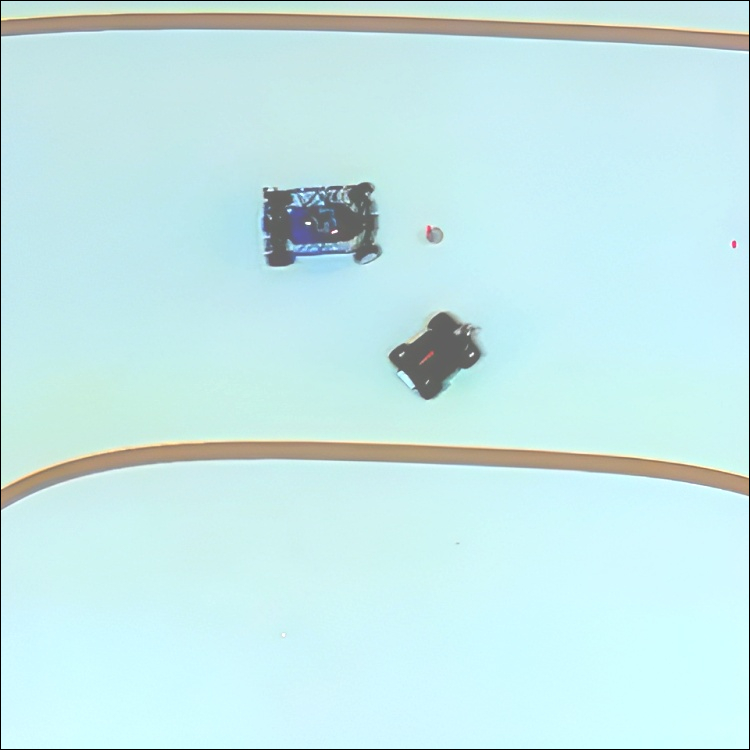}} \\
\subfloat{\includegraphics[height=0.4\columnwidth]{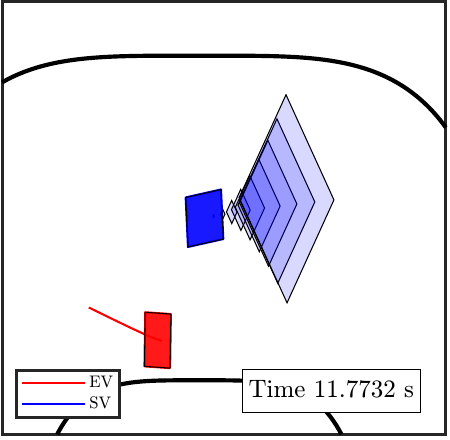}} 
\subfloat{\includegraphics[height=0.4\columnwidth]{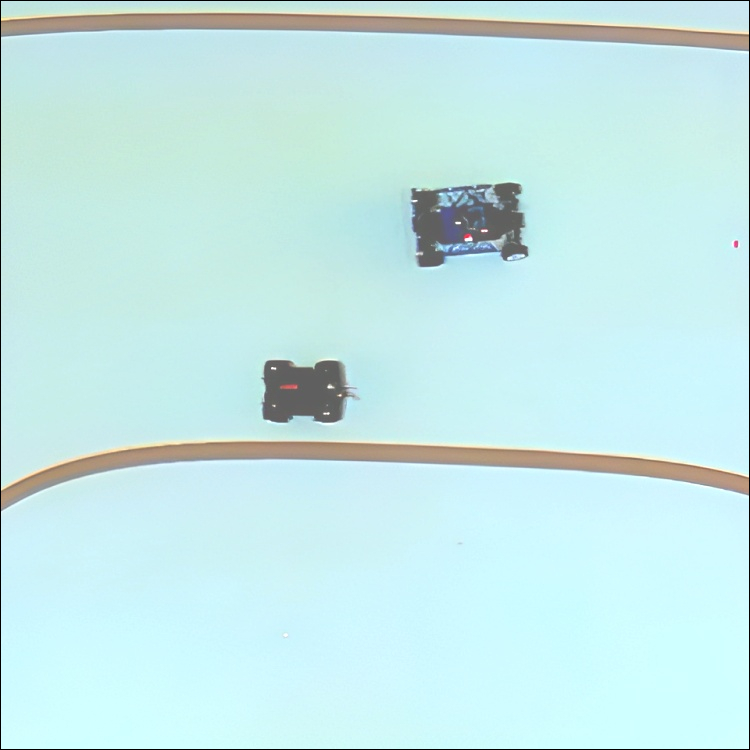}}
\centering
\caption{Visualization of the predicted occupancy of the SV and the planned path of the EV at key moments during the encounter (left column), and the corresponding scenarios in the experiment (right column).}
\label{fig: EXP_Encounter_Key_Moments}
\end{figure}

\begin{figure}[!t]
\centering
\includegraphics[width=0.9\columnwidth]{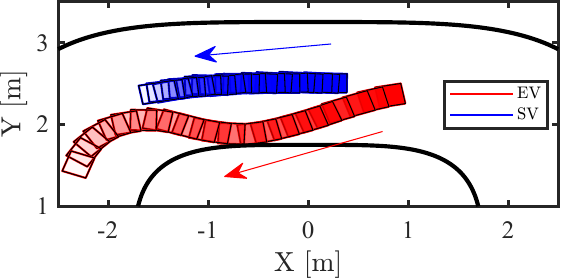}
\caption{Measured trajectories of EV and SV during the overtaking experiment (from $9.56 \ {\rm s}$ to $17.50 \ {\rm s}$).}
\label{fig: EXP2_Overtake_Path}
\end{figure}

\begin{figure}[!t]
\centering
\includegraphics[width=0.9\columnwidth]{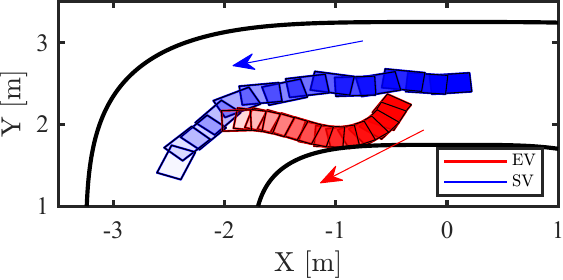}
\caption{Measured trajectories of EV and SV during the evasion experiment (from $6.65 \ {\rm s}$ to $11.27 \ {\rm s}$).}
\label{fig: EXP3_ChaseAvoid_Path}
\end{figure}

This section presents hardware-experiment results in scenarios different from those in Section~\ref{sec: simulations} to demonstrate the method's applicability across different situations, as well as the real-time performance of the method in applications. The experiment videos are accessible online\footnote{https://youtu.be/ulBvjGxYkIE}.

\subsection{Scenario Description}\label{sec: scenario description exp}
The experimental environment is illustrated in Fig.~\ref{fig:two_vehicle_interaction}, where the EV and SV are represented by a hamster platform and a cascar platform, respectively. The size parameters of the EV and SV are the same as those in Table~\ref{Table_exp_parameters robotic}. Both platforms are kinematically equivalent to a full-scale vehicle and have similar dynamic characteristics, including suspension and tire systems, as those of a regular vehicle. This makes them suitable platforms for testing the proposed method in realistic scenarios. In the experiments, the mission of the SV is to move along the center line of the track with a predefined reference velocity, and the objective of the EV is to track the center line of the track and a predefined reference velocity while ensuring collision avoidance with the SV. The SV holds a higher priority than the EV, such that the EV needs to apply the proposed method to replan the reference trajectory at every time step. The SV takes control inputs from an unknown control set $\tilde{\mathbb{U}}_t^s$. The motion-planning performance of the EV is evaluated in three different scenarios:
\begin{itemize}
\item \textbf{Encounter:} EV and SV move in opposite directions, with the EV avoiding collisions with the SV.
\item \textbf{Overtaking:} the faster rear EV surpasses a slower preceding SV.
\item \textbf{Evasion:} the slower preceding EV takes evasive action to avoid a collision with a faster rear SV.
\end{itemize}

The design of the MPC planner follows the same approach as presented in Section~\ref{sec: simulations}, so
details are omitted for conciseness. Note that to enhance the efficiency of solving the OCP problem, several adaptations are applied. First, the dimension of the EV model in \eqref{eq: EV model} is reduced to $4$ by taking the longitudinal acceleration and front tire angle as the inputs. Second, the prediction horizon is changed to $N=9$. 

\subsection{Experiment Results and Discussion} \label{sec: experiment results and discussions}
In the experiments, the polytope computation and the optimization problem are solved by the same tools as in Section~\ref{sec: simulations}.  The positioning of both EV and SV was performed by the Qualisys motion-capture system \cite{qualisys}. The estimation uncertainties of elements in the information set~\eqref{eq_update_information} arose from the state-measurement uncertainties of the SV. However, these measurement uncertainties were minor as a result of the high accuracy of the positioning system. The control inputs of the vehicles were computed on a laptop running Ubuntu 20.04 LTS operating system. The communications were established by the Robot Operating System (ROS Noetic). The reference paths of the EV and SV were followed by a pure-pursuit controller \cite{coulter1992implementation}, and the reference velocities were followed by the embedded velocity-tracking controllers of the hamster platform and the cascar platform, respectively. The initial states and reference velocities of the EV and SV in each scenario are collected in Table~\ref{Table_spe_exp_parameters robotic}. The reference velocity for the hamster platform is designed following the specifications detailed in \cite{berntorp2020positive} and \cite{berntorp2019motion}. In the experiments, the SV was designed to execute non-smooth trajectories by oscillating the front wheels to increase motion uncertainties for the EV. The set $\hat{\mathbb{U}}_t^s$ was learned by estimating the accelerations of the SV in the ground coordinate system, and this does not require oscillation of the SV input.

In the encounter scenario, the EV and SV will intersect twice. The trajectories of both vehicles during these encounters are depicted in Fig.~\ref{fig: EXP1_Encounter_Path}. Fig.~\ref{fig: Set_with_time_encounter}(a) visualizes the learned control set $\hat{\mathbb{U}}_t^s$ of the SV over time, and Fig.~\ref{fig: Set_with_time_encounter}(b) presents the velocity of the SV in the ground coordinate system. Following Fig.~\ref{fig: Set_with_time_encounter}(a), Fig.~\ref{fig: EXP_Encounter_Key_Moments_Occ} illustrates how the predicted occupancy of the SV over the prediction horizon, i.e., $\hat{\mathcal{O}}_{i|t}^s$, evolves at the initial moments. Leveraging the occupancy $\hat{\mathcal{O}}_{i|t}^s$, the EV plans its reference by solving problem \eqref{eq: general ocp}. The occupancy $\hat{\mathcal{O}}_{i|t}^s$, along with the planned reference path of the EV at key moments during the first encounter, are exhibited in Fig.~\ref{fig: EXP_Encounter_Key_Moments}.

In the encounter experiments illustrated in Fig.~\ref{fig: EXP1_Encounter_Path}, the EV successfully avoids a collision with the SV. Fig.~\ref{fig: Set_with_time_encounter}(a) demonstrates that the set $\hat{\mathbb{U}}^s_t$ expands over time as more information is obtained, ultimately converging after approximately $4 \ {\rm s}$. This observation aligns with the SV's velocity, as seen in Fig.~\ref{fig: Set_with_time_encounter}(b), which stabilizes after $4 \ {\rm s}$. Fig.~\ref{fig: EXP_Encounter_Key_Moments_Occ} further illustrates the impact of the set $\hat{\mathbb{U}}^s_t$ on the predicted occupancy $\hat{\mathcal{O}}_{i|t}^s$, which increases at the initial moments and subsequently tends to stabilize when $\hat{\mathbb{U}}^s_t$ ceases to expand. This reflects that in the online execution, the robustness of the method strengthens as the information set $\mathcal{I}_t^s$ becomes richer. Finally, the results presented in Fig.~\ref{fig: EXP_Encounter_Key_Moments} affirm that the EV can effectively plan collision-avoidance reference trajectories in the presence of the uncertain dynamic obstacle, and move back to the track when the obstacle no longer has the risk of collision. 

The trajectories of the EV in the overtaking and evasion experiments are illustrated in Fig.\ref{fig: EXP2_Overtake_Path} and Fig.\ref{fig: EXP3_ChaseAvoid_Path}, respectively. In both scenarios, the proposed method successfully addresses the motion-planning challenges. The corresponding learned sets $\hat{\mathbb{U}}_t^s$ and the predicted occupancies $\hat{\mathcal{O}}_{i|t}^s$, along with the planned collision-avoidance trajectories of the EV, exhibit qualitative similarities. In the presented experiments, the average execution time of the proposed method is $0.064 \ {\rm s}$, and the standard deviation is $0.014 \ {\rm s}$, with the motion planner executed $200$ times in the encounter scenario. The computation time in the overtaking and evasion scenarios is comparably similar. Both the simulations and experiments demonstrate the method's effectiveness in addressing environmental uncertainties and provide validation its real-time applicability. It is also crucial to note that in both Sections~\ref{sec: simulations}--\ref{sec: hardware exp}, the SV is predicted by the double-integrator model~\eqref{eq_SV_model} with the true dynamics completely unknown to the EV, this demonstrated the effectiveness of the method in the presence of unknown dynamics of obstacles.

\subsection{Discussion on Applicability and Future Research}
The case studies used one SV to illustrate the implementation of the proposed method, while the proposed method can directly be applied to manage more obstacles. In addition, the method applies to more scenarios, like highway overtaking, merging, and roundabout driving scenarios. The method's effectiveness has been evaluated in four additional scenarios, including a multiple obstacles scenario, a high-speed overtaking scenario, a cornered track scenario, and a roundabout traffic scenario. The results are presented as supplementary material in the video linked in Section~\ref{sec: experiment results and discussions}. Furthermore, we remark that the proposed method does not have assumptions about the obstacle's control strategy, this means the method also handles the case where the obstacle actively reacts to the maneuver of the EV. This has been verified in an oncoming traffic scenario as in \cite[Section VI. A]{liu2023safe}, where both the EV and obstacle successfully generate collision-free trajectories when they apply the same strategy. The results are also provided as supplementary material in the video. Note that in extreme high-speed high-curvature scenarios, the effectiveness of this method might be affected because of the unpredictable road geometry that can alter the intended trajectories of the obstacles. These factors are not included in this paper.

Future research would focus on integrating with state-of-the-art environment-aware motion-prediction methods, e.g., the map-informed approach~\cite{hallgarten2024stay}, to address problems in more challenging high-speed high-curvature scenarios. Another research direction of great interest is integrating an interaction-aware motion-prediction method, e.g., from a game theoretical perspective~\cite{peters2024contingency}, with the proposed approach for proactive motion planning in multi-vehicle interaction scenarios.

\section{Conclusion} \label{Conclusion}
This paper studied safe motion planning of autonomous robotic systems under obstacle uncertainties.  An online learning-based method was proposed to learn the uncertain control set of the obstacles to compute the forward reachable sets, which were integrated into a robust MPC planner to obtain the optimal reference trajectory for the ego system. Simulations and hardware experiments with an application of the method to an autonomous driving system in different scenarios show that: (1) The method is safer than deterministic MPC and less conservative than worst-case robust MPC, while maintaining safety in uncertain traffic environments;  (2) The method can perform safe motion planning in uncertain environments without prior knowledge of uncertainty information of the obstacles; (3) The method is applicable and real-time implementable in practical scenarios. 

\section*{Acknowledgments}
The authors would like to acknowledge Prof. Lars Nielsen at Link\"oping University for discussions regarding the second-order integrator and polytope-based collision-avoidance constraints. Dr. Theodor Westny at Link\"oping University is acknowledged for skillful assistance in the experiments.

\bibliographystyle{IEEEtran}
\bibliography{IEEEabrv, mybib}


\vspace{-33pt}
\begin{IEEEbiography}[{\includegraphics[width=1in,height=1.25in,clip,keepaspectratio]{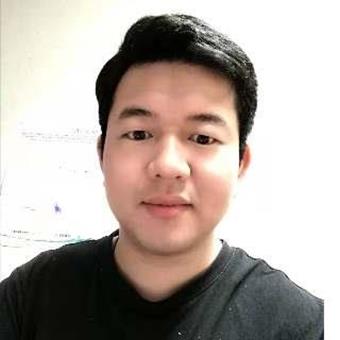}}]{Jian Zhou}
received the B.E. degree in vehicle engineering from
the Harbin Institute of Technology, China, in
2017, and the M.E. degree in vehicle engineering from Jilin University, China, in 2020. He is currently a Ph.D. student with the Department of Electrical Engineering, Link\"oping University, Sweden. His
research interests are motion planning and control for autonomous vehicles and optimization with application to autonomous driving.
\end{IEEEbiography}
\vspace{-20pt}
\begin{IEEEbiography}[{\includegraphics[width=1in,height=1.25in,clip,keepaspectratio]{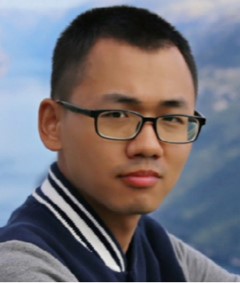}}]{Yulong Gao}
received the B.E. degree in Automation in 2013, the M.E. degree in Control Science and Engineering in 2016, both from Beijing Institute of Technology, and the joint Ph.D. degree in Electrical Engineering in 2021 from KTH Royal Institute of Technology and Nanyang Technological University. He was a Researcher at KTH from 2021 to 2022 and a postdoctoral researcher at Oxford from 2022 to 2023. He is a Lecturer (Assistant Professor) at the Department of Electrical and Electronic Engineering, Imperial College London, from 2024. His research interests include formal verification and control, machine learning, and applications to safety-critical systems.
\end{IEEEbiography}
\vspace{-20pt}
\begin{IEEEbiography}[{\includegraphics[width=1in,height=1.25in,clip,keepaspectratio]{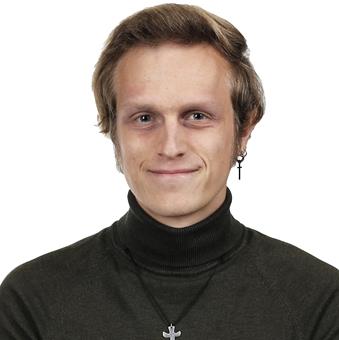}}]{Ola Johansson} received the B.Sc degree in Electrical Engineering in 2020 and the M.Sc degree in Systems Control and Robotics in 2022, both from KTH Royal Institute of Technology, Sweden. He is currently a Research Engineer at the Department of Electrical Engineering, Link\"oping University, Sweden. His research includes positioning, motion planning, and control of drones and robotic systems.
\end{IEEEbiography}
\vspace{-20pt}
\begin{IEEEbiography}[{\includegraphics[width=1in,height=1.25in,clip,keepaspectratio]{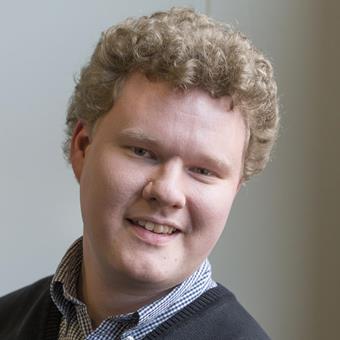}}]{Bj\"orn Olofsson}
received the M.Sc. degree in Engineering Physics in 2010 and the Ph.D. degree
in Automatic Control in 2015, both from Lund University, Sweden. He is currently an Associate Professor at the Department of Automatic Control, Lund University, Sweden, and also affiliated with the Department of Electrical Engineering, Link\"oping University, Sweden. His research includes motion control for robots and vehicles, optimal control, system identification, and statistical sensor fusion.
\end{IEEEbiography}
\vspace{-20pt}
\begin{IEEEbiography}[{\includegraphics[width=1in,height=1.25in,clip,keepaspectratio]{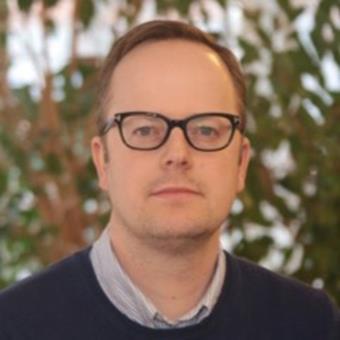}}]{Erik Frisk}
was born in Stockholm, Sweden, in 1971. He received the Ph.D. degree in Electrical Engineering in 2001 from Link\"oping University, Sweden. He is currently a Professor with the Department of Electrical Engineering, Link\"oping University, Sweden. His main research interests are optimization techniques for autonomous vehicles in complex traffic scenarios and model and data-driven fault diagnostics and prognostics. 
\end{IEEEbiography}

\vfill

\end{document}